\newcommand{\netname}{{UnIMP}}
\newcounter{example}[section]
\renewcommand{\theexample}{\nthesection.\arabic{example}}
\newcounter{definition}[section]
\renewcommand{\thedefinition}{\nthesection.\arabic{definition}}
\newenvironment{definition}{
     \refstepcounter{definition}
     {\vspace{1ex} \noindent\bf  Definition  \thedefinition:}}{
     \vspace{1ex}} %\hspace*{\fill}\vspace*{1ex}}
\newcounter{theorem}[section]
\renewcommand{\thetheorem}{\nthesection.\arabic{theorem}}
\newenvironment{theorem}{\begin{em}
        \refstepcounter{theorem}
        {\vspace{1ex} \noindent\bf  Theorem  \thetheorem:}}{
        \end{em}\vspace{1ex}} %\hspace*{\fill}\vspace*{1ex}}
\newcounter{lemma}[section]
\renewcommand{\thelemma}{\nthesection.\arabic{lemma}}
\newenvironment{lemma}{\begin{em}
        \refstepcounter{lemma}
        {\vspace{1ex}\noindent\bf Lemma \thelemma:}}{
        \end{em}\vspace{1ex}} %\hspace*{\fill}\vspace*{1ex}}
\newcounter{remark}[section]
\renewcommand{\theremark}{\nthesection.\arabic{remark}}
\newcommand{\nthesection}{\arabic{section}}
\newcommand{\beqn}{\begin{eqnarray*}}
\newcommand{\eeqn}{\end{eqnarray*}}
\newcounter{ccc}
\newcommand\vldbavailabilityurl{URL_TO_YOUR_ARTIFACTS}
\newcommand\vldbpagestyle{plain} 
\begin{document}

\title{On LLM-Enhanced Mixed-Type Data Imputation with High-Order Message Passing}

\author{Jianwei Wang$^{1}$, ~~ Kai Wang$^{2}$, ~~Ying Zhang$^{3}$,~~Wenjie Zhang$^{1}$, ~~Xiwei Xu$^{4}$,~~ Xuemin Lin$^{2}$}
\affiliation{\normalsize{$^{1}${The University of New South Wales, Sydney, Australia}}\\
 \normalsize{$^{2}${Antai College of Economics \& Management, Shanghai Jiao Tong University, Shanghai, China}} \\
 \normalsize{$^{3}${Zhejiang Gongshang University, Hangzhou, China}}\\
 \normalsize{$^{4}${Data61, CSIRO, Australia}}}
\email{{jianwei.wang1, wenjie.zhang}@unsw.edu.au, {w.kai, xuemin.lin}@sjtu.edu.cn, ying.zhang@zjgsu.edu.cn, xiwei.xu@data61.csiro.au}

\begin{abstract}

Missing data imputation, which aims to impute the missing values in the raw datasets to achieve the completeness of datasets, is crucial for modern data-driven models like large language models (LLMs) and has attracted increasing interest over the past decades. 
Despite its importance, existing solutions for missing data imputation either 1) only support numerical and categorical data or 2) show an unsatisfactory performance due to their design prioritizing text data and overlooking intrinsic characteristics of tabular data. 

In this paper, 
we propose \netname, a \textbf{\underline{Un}}ified \textbf{\underline{IMP}}utation framework that leverages LLM and high-order message passing to enhance the imputation of mixed-type data including numerical, categorical, and text data.
Specifically, we first introduce a cell-oriented hypergraph to model the table. We then propose BiHMP, an efficient \underline{Bi}directional \underline{H}igh-order \underline{M}essage-\underline{P}assing network to aggregate global-local and high-order information on the constructed hypergraph while capturing the inter-column heterogeneity and intra-column homogeneity.
To effectively and efficiently align the capacity of the LLM with the information aggregated by BiHMP, we introduce Xfusion, which, together with BiHMP, acts as adapters for the LLM. 
We follow a pre-training and fine-tuning pipeline to train \netname, integrating two optimizations: chunking technique, which divides tables into smaller chunks to enhance efficiency; and progressive masking technique, which gradually adapts the model to learn more complex data patterns. 
Both theoretical proofs and empirical experiments on 10 real-world datasets 
highlight the superiority of \netname~ over existing techniques.

\end{abstract}

 \maketitle
\pagestyle{\vldbpagestyle}

\ifdefempty{\vldbavailabilityurl}{}{
% \vspace{-5mm}
\begingroup\small\noindent\raggedright\textbf{PVLDB Artifact Availability:}\\
The source code, data, and/or other artifacts have been made available at \url{https://github.com/guaiyoui/UnIMP}.
\endgroup
}

% \vspace{-8mm}
\section{Introduction}

\label{sec:intro}

% 在Llama3的文档中介绍的，Data quality对llama 3的提升重要，模型的架构没有变化。

Data quality has gained increasing attention due to its pivotal role in developing and training data-driven models, particularly those built on artificial intelligence (AI) technology. For instance, the performance of large language models (LLMs) can be significantly enhanced by high-quality training datasets, without requiring substantial changes to the model architecture~\cite{dubey2024llama}. 
The missing data problem, as one of the most critical issues in the area of data quality, is ubiquitous in real-world raw datasets due to various factors such as poor data collection practices and device malfunctions~\cite{miao2022experimental}. To handle these issues, missing data imputation (a.k.a. missing value imputation)~\cite{wang2024missing,you2020handling} has been introduced, which aims to fill in missing values using information from observed data samples and features, thereby enhancing the quality of the raw dataset and improving the performance of downstream tasks. 

When applying imputation techniques to real-world datasets, it is desirable that these techniques can achieve high accuracy and support a variety of data types. High imputation accuracy is crucial, as existing studies have demonstrated that a higher accuracy would generally lead to a higher downstream task performance~\cite{wang2024missing, miao2022experimental, zhao2023transformed,yoon2018gain}. 
Supporting multiple data types is also important, as real-world datasets in fields like e-commerce~\cite{xie2019sentiment} and healthcare~\cite{johnson2016mimic} often contain a mix of numerical, categorical and text data~\cite{singh2023embeddings, yang2015entity}.
{\color{black}Moreover, with advances in the Internet of Things (IoT~\cite{madakam2015internet}) and big data, the proportion of mixed-type data continues to grow~\cite{wasti2022growing, mxied_data}.}

\begin{table}
\caption{Comparison of representative imputation methods. "---" denotes not applicable. Abbr: GAN (generative adversarial network), GNN (graph neural network), OT (optimal transport), AE (autoencoder), LLMs (large language models).}

 \vspace{-3mm}
 
 \centering \scalebox{0.75}{
\begin{tabular}
{@{}l@{\hspace{5pt}}c@{\hspace{8pt}}c@{\hspace{2pt}}c@{\hspace{5pt}}c@{\hspace{8pt}}c@{}}
\toprule[1.2pt]
% TODO
 \multirow{2}{*}{\bf Methods}  & \multicolumn{3}{c}{\bf Handle Data Types} & \multirow{2}{*}{\bf \ Generalization} & \multirow{2}{*}{\bf Backbone}
    \\ \cline{2-4} 
    \multicolumn{1}{c}{} & {\bf Num.} & {\bf Cate.} & {\bf Text} & \multicolumn{1}{c}{}
% & \makecell[c]{Generation \\ Capacity}     
\\ \midrule
Mean/Mode~\cite{jamshidian2007advances} & \ding{72}\ding{72}     & \ding{72}\ding{72} &    ---  &  \ding{72}\ding{72}\ding{72}  & Statistics         \\
KNNI~\cite{zhang2012nearest}   & \ding{72}\ding{72}\ding{72}      & \ding{72}\ding{72}\ding{72} & ---   &  \ding{72}\ding{72}\ding{72}      &  Similarity   \\
MICE~\cite{white2011multiple}   & \ding{72}\ding{72}\ding{72}     & \ding{72}\ding{72}\ding{72} & ---       & \ding{72}   & Regression      \\
GAIN~\cite{yoon2018gain}/VGAIN~\cite{miao2022experimental}    & \ding{72}\ding{72}\ding{72}    & \ding{72}\ding{72}\ding{72}  & ---  &  \ding{72}  & GAN \\
GRAPE~\cite{you2020handling}/IGRM~\cite{zhong2023data} & \ding{72}\ding{72}\ding{72}    & \ding{72}\ding{72}\ding{72} & --- & \ding{72} & GNNs  \\
 % & \ding{72}\ding{72}\ding{72}    & \ding{72}\ding{72}\ding{72} & \ding{72} & \ding{72}  & GNNs  \\
TDM~\cite{zhao2023transformed}  & \ding{72}\ding{72}\ding{72}    & \ding{72}\ding{72}\ding{72} & --- & \ding{72}   & OT   \\
ReMasker~\cite{DBLP:conf/iclr/DuM024} & \ding{72}\ding{72}\ding{72}    & \ding{72}\ding{72}\ding{72} & --- & \ding{72} & AE  \\
NOMI~\cite{wang2024missing}  & \ding{72}\ding{72}\ding{72}    & \ding{72}\ding{72}\ding{72}  & ---      & \ding{72}    & Similarity      \\
DFMs~\cite{narayan2022can} & \ding{72}    & \ding{72}\ding{72}  & \ding{72}\ding{72}\ding{72} & \ding{72}\ding{72}\ding{72}  & LLMs    \\
Table-GPT~\cite{li2024table}/Jellyfish~\cite{zhang2024jellyfish} & \ding{72}    & \ding{72}\ding{72}  & \ding{72}\ding{72}\ding{72} & \ding{72}\ding{72}\ding{72}  & LLMs   \\
\midrule
\netname~(ours)  & \ding{72}\ding{72}\ding{72} & \ding{72}\ding{72}\ding{72} & \ding{72}\ding{72}\ding{72} & \ding{72}\ding{72}\ding{72} &  \makecell[c]{ GNNs+LLMs} \\ \bottomrule

\end{tabular}}

\vspace{-5mm}
\label{tab:intro_comparasion}
\end{table}

\begin{figure*}
  \centering
  \includegraphics[width=0.950\linewidth]{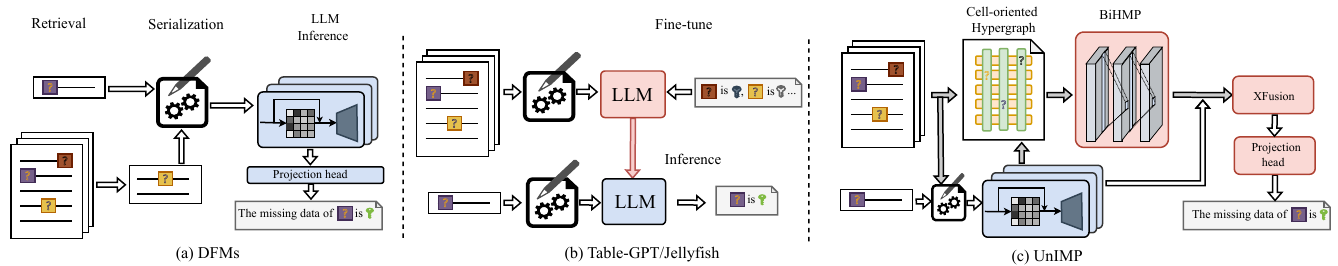}
  \vspace{-4mm}
  \caption{Comparisons of Frameworks. }
  \vspace{-5mm}
  \label{fig:frameworks_comparasion}
\end{figure*}

% wide range
\noindent\textbf{Existing works.} Given the importance and practical value of missing data imputation, a set of techniques has been developed~\cite{wang2024missing, you2020handling, DBLP:conf/iclr/DuM024}. Some representative and recent methods are summarized in Table~\ref{tab:intro_comparasion}, and more details can be found in the surveys~\cite{miao2022experimental, lin2020missing}. 
The early-stage imputation methods were primarily rule-based, including statistics-based approaches such as Mean and Mode~\cite{jamshidian2007advances}, and similarity-based methods such as $K$ Nearest Neighbors Imputation (KNNI)~\cite{zhang2012nearest}. These methods have good generalization capacity, as the rules can be applied to various datasets. However, their imputation accuracy is often limited since these fixed rules often struggle to capture the underlying patterns in diverse datasets effectively.

To better model relationships among samples and features within datasets for imputation, various machine learning and deep learning methods are introduced, such as regression algorithm in MICE~\cite{white2011multiple}, generative adversarial net (GAN)~\cite{DBLP:journals/tkde/WuWMWY24} in GAIN~\cite{yoon2018gain} and VGAIN~\cite{miao2022experimental}, optimal transport (OT)~\cite{DBLP:journals/tkde/WuMNZHY24} in TDM~\cite{zhao2023transformed}, auto-encode (AE) in ReMasker~\cite{DBLP:conf/iclr/DuM024}, and graph neural networks (GNNs) in GRAPE~\cite{you2020handling} and IGRM~\cite{zhong2023data}.
While these methods achieve outstanding performance in handling numerical and categorical data, they struggle to process text data effectively. 
Furthermore, these approaches typically train a separate model for each dataset or even for each individual feature, hindering their generalizability to other datasets.
% The specific data percent need to be confirmed.

% Supervised Fine-Tuning and In-context Learning.
Recently, LLM-based methods are emerging in the field, leveraging knowledge encoded in LLM to predict missing data. LLM-based imputation can be broadly categorized into two primary categories: in-context-learning-based methods (e.g.,~\cite{narayan2022can, qian2024unidm, zhang2023large, biester2024llmclean}) and fine-tuning-based methods (e.g.,~\cite{li2024table, zhang2024jellyfish}).
The in-context-learning-based methods, with \underline{D}irect \underline{F}oundation \underline{M}odel\underline{s} (DFMs)~\cite{narayan2022can} as a prominent representative depicted in Figure~\ref{fig:frameworks_comparasion}(a), directly infer missing data by pre-trained LLM with carefully designed prompts.
The fine-tuning-based methods, with Table-GPT and Jellyfish as prominent representatives depicted in Figure~\ref{fig:frameworks_comparasion}(b), fine-tune LLM to adapt the model for tabular tasks including imputation.

\noindent\textbf{Motivations.} 
By serializing tabular data into text and generating the next token auto-regressively from prompts, existing LLM-based methods can handle mixed-type data with good generalization capacity.
However, they typically exhibit limited performance, especially with numerical data. They directly recast the problem of imputation as text generation while the intrinsic characteristics of tabular data remain insufficiently explored: 
1) \textit{Global-local information}. The value of a cell is influenced by both global patterns across the entire table and localized details of the cell. These LLM-based methods focus on individual samples and several examples as prompt, overlooking global information; 
2) \textit{High-order dependencies}. The relationships in the table are not necessarily pairwise but may involve three or more entities simultaneously. However, LLM-based methods, built on the attention mechanism~\cite{vaswani2017attention}, mainly focus on pairwise relationships. While stacking attention layers allows tokens later in the prompt to capture some high-order relationships, it lacks an explicit mechanism to capture them comprehensively; 
3) \textit{Inter-column heterogeneity and intra-column homogeneity}. Features across columns can be highly diverse, and cells within the same column generally exhibit consistent semantics. LLM-based methods, designed for sequential inputs, struggle to align the specific contextual relationships between columns and within columns.

As summarized in Table~\ref{tab:intro_comparasion}, existing imputation methods generally fall into two categories: 1) LLM-based methods exhibit an unsatisfactory performance, particularly for numerical data, and 2) other learning-based methods and rule-based methods only support numerical and categorical data. 
% and demonstrate limited generalization capacity.
To better support real-world applications, an imputation method capable of handling mixed-type data while maintaining high accuracy is in demand.
A straightforward approach would be to impute each data type using the outperforming method for that type, such as applying MICE~\cite{royston2011multiple} or NOMI~\cite{wang2024missing} for numerical and categorical data, and Jellyfish~\cite{zhang2024jellyfish} for text data.
However, imputing each data type separately may ignore the intricate dependencies between them, leading to inconsistencies and reduced overall accuracy.

% nd theoretical 

\noindent\textbf{Challenges}. To design an accurate imputation method for mixed-type data, two challenges exist below:

\textit{Challenge I: How to aggregate global-local and high-order information for imputation while capturing inter-column and intra-column patterns?} 
The existence of missing data and large-scale tables complicate the aggregation process.
A direct solution is to input the entire table into the LLM. However, computing complexity grows quadratically as the size of input increases.
Another promising alternative is to apply the global to local framework~\cite{wang2022global} to learn the multi-level features for imputation. However, it relies on similarity search on samples with missing features, leading to inaccurate results. Methods such as retrieval-augmented generation~\cite{lewis2020retrieval} show promise for aggregating information. However, they primarily focus on local information while overlooking global information. 
Therefore, it is challenging to aggregate these key information.

\textit{Challenge II: How to effectively integrate the aggregation module and train the model to support mixed-type imputation?}
The presence of mixed-type data complicates the integration process. 
To handle mixed-type data, a direct approach would be to sequentially apply LLM, aggregation module and then another LLM.
This pipeline first uses the LLM to embed the input into a shared latent space, followed by information aggregation and finally generates the output by another LLM.
However, treating LLM and the aggregation module as separate components may limit the alignment of the capabilities encoded in LLM with the aggregated information. 
Moreover, training this sequential pipeline on tables of varying sizes may fail to adequately utilize computational resources, resulting in inefficiencies.
Hence, it is challenging to integrate the aggregation module and train the model for mixed-type imputation.
% Moreover, applying this xxx pipeline to the tables with varying sizes, may not effectively utilize computational resources and could lead to inefficiencies.

\noindent\textbf{Our solutions}. Guided by the above challenges, we propose \netname, a \textbf{\underline{Un}}ified \textbf{\underline{IMP}}utation framework as depicted in Figure~\ref{fig:frameworks_comparasion}(c) that employs LLM and the high-order message passing for accurate mixed-type data imputation. 

To address \textit{Challenge I}, we design a cell-oriented hypergraph in \netname~to model tabular datasets and introduce BiHMP, an efficient and effective \underline{Bi}directional \underline{H}igh-order \underline{M}essage-\underline{P}assing network, to aggregate information on the constructed hypergraph. 
By leveraging high-order message passing on the hypergraph, BiHMP learns to aggregate global-local and high-order information while capturing intra- and inter-column relationships.
Most existing hypergraphs for tables are value-oriented, modeling distinct values as nodes~\cite{du2022learning}. However, this approach struggles with modeling unseen values in the context of missing data and handling numerical and textual data where co-occurrence patterns are not easily defined, limiting its applicability in mixed-type imputation.
Therefore, we adopt the cell-oriented hypergraph for imputation following~\cite{chen2024hytrel}, where each cell in the dataset is modeled as a node and nodes within the same row or column are connected by hyperedges. 
This structure enables the flexible integration of missing and mixed-type data.
Furthermore, BiHMP is designed to efficient yet effective aggregate information.
It comprises iteratively applied linear hyperedge-to-node and node-to-hyperedge layers. The hyperedge-to-node layer enriches node representations by aggregating information from their associated hyperedges. Conversely, the node-to-hyperedge layer refines hyperedge features by propagating information from connected nodes back to their hyperedges.

To address \textit{Challenge II}, \netname~is designed with the LLM as the backbone, while BiHMP and Xfusion (a fusion module based on attention mechanism~\cite{vaswani2017attention}) serve as adapters~\cite{zhang2023llama, he2021effectiveness} for the LLM.
% 说明一下我们和sequential的区别
Specifically, prompts are together propagated by both the LLM and the BiHMP on the embedded hypergraph. The information aggregated by BiHMP, along with the LLM-generated data, is sent to the Xfusion module to enhance the alignment and integration of both sets of information. The fused data is subsequently sent to the projection head of LLM to generate output.
Moreover, The adapter strategy trains only the BiHMP, Xfusion and projection head, freezing most LLM parameters to enhance efficiency and better preserve pre-trained knowledge in LLM while incorporating BiHMP-aggregated information.
% To more effectively and efficiently

To more effectively and efficiently train \netname~for mixed-type imputation, we follow the pre-train and fine-tune strategy~\cite{alexandr2021fine,li2023label,DBLP:journals/pvldb/WangWLZZ24} and integrate two optimizations, i.e., the chunking and the progressive masking motivated by~\cite{yepes2024financial, you2020handling,DBLP:conf/iclr/DuM024}.
Specifically, the chunking technique splits all tables into smaller, uniform chunks to handle large tables and facilitate efficient batch processing. This approach allows for more effective use of computational resources, such as GPU memory and parallel computation capabilities.
% , while enhancing training stability across diverse datasets.
The progressive masking technique, on the other hand, gradually increases the complexity of the imputation task by progressively masking more data during training. 
This technique enables the model to learn complex patterns progressively, improving its understanding of data relationships and generalization capacity.

\noindent\textbf{Theoretical and empirical studies}. We theoretically and empirically demonstrate the superiority of \netname. Theoretically, we prove that a model capable of capturing global-local information, high-order dependencies, and inter- and intra-column patterns leads to improved imputation accuracy. Furthermore, we show that our \netname~effectively incorporates these critical properties.
Empirical evaluations on 10 real-world datasets with varying table sizes and data types also validate the excellent performance of \netname~in terms of both accuracy and efficiency. 
For numerical and categorical data, \netname~ reduces imputation RMSE by an average of 12.11\% to 45.22\% compared to previous state-of-the-art methods and other baselines. In the case of imputing text data, \netname~ improves imputation accuracy by 30.53\%, 28.04\% and 23.47\% over DFMs~\cite{narayan2022can}, Table-GPT~\cite{li2024table} and Jellyfish~\cite{zhang2024jellyfish} respectively, as measured by  $\text{ROUGE-1}_{F1}$. 
Furthermore, \netname~ can handle large datasets and exhibits robust generalization ability.

\noindent \textbf{Contributions.}
The main contributions are as follows.
\begin{itemize}[leftmargin=10pt, topsep=0pt]

\item{} We propose a unified framework \netname~ to accurately impute mixed-type data, including numerical, categorial and text data.

% \item{} .

\item{} We propose a cell-oriented hypergraph to model tabular data and design BiHMP, an efficient and effective bidirectional high-order message-passing network, to aggregate information while capturing the key intrinsic characteristics of tabular data.

\item{} We introduce Xfusion, along with BiHMP, as adapters for LLM to process diverse data and enable accurate mix-type imputation. Additionally, we integrate chunking and progressive masking techniques to train \netname, enhancing performance.

\item{} We theoretically prove the critical role of global-local information, high-order relationships, and inter- and intra-column patterns to the accuracy of imputation

\item{} Extensive empirical experiments on 10 real-world datasets validate the outstanding performance of \netname.

\end{itemize}

\vspace{-2mm}
\section{Preliminaries}
\label{sec:pre}

\begin{table}[t]
\centering %\small %\scriptsize
\caption{Symbols and Descriptions}
\vspace{-0.4cm}
\label{tab:symbol}
\begin{tabular}{|p{2.0cm}|p{6.2cm}|}
% {|p{3.2cm}|p{4.7cm}|} {|c|c|}
\hline
\cellcolor{lightgray}\textbf{Notation} & \cellcolor{lightgray}\textbf{Description} \\ \hline\hline
$X, \Tilde{X}, \overline{X}$& raw dataset, imputed dataset and ground-truth \\ \hline
$n, d$& the row number and column number \\ \hline
$x_i, x_{ij}$ & $i$-th sample in $X$ and its $j$-th column data \\ \hline
$M \in \{0, 1\}^{n\times d}$ & the mask matrix indicating incompleteness\\ \hline
$m_i, m_{ij}$ & $i$-th sample in $M$ and its $j$-th column data \\ \hline
$HG(\mathcal{V}, \mathcal{E})$ & hypergraph with nodes $\mathcal{V}$ and hyperedges $\mathcal{E}$ \\ \hline
% $t, y$ & training token $t$ and training target $y$. \\ \hline
$\theta$ & model parameters. \\ \hline
$z_{v_i}, z_{e_i} $ & the hidden embedding of node $v_i$ \& hyperedge $e_i$ \\ \hline
$\sigma $ & non-linear function (ReLU) \\ \hline
$f, g $ & functions or layers in the neural networks\\ \hline

\end{tabular}
\vspace{-5mm}
\end{table}
\vspace{-1mm}
\subsection{Problem Statement}
\vspace{-1mm}

Following previous works~\cite{wang2024missing, miao2022experimental}, the task of missing data imputation is defined over tabular datasets $\mathcal{X}$~\cite{li2024table, miao2022experimental, zhao2023transformed, wang2024missing}, where each $X \in \mathcal{X}$ consists of $n$ data samples (rows) and $d$ features (columns). We denote the $j$-th feature of the $i$-th sample as $x_{ij}$, which can be of numerical, categorical or text type.
A mask matrix ${M} \in {R}^{n\times d}$ is used to indicate missing components in $X$.
$m_{ij}$ equals 1 if $x_{ij}$ is observed and 0 if $x_{ij}$ is missing, i.e., $X=X_{obs}\cup X_{miss}$ where $X_{obs}=\{x_{ij}|x_{ij}\in X, m_{ij}=1\}$ and $X_{miss}=\{x_{ij}|x_{ij}\in X, m_{ij}=0\}$. 
The imputed data and the ground-truth data are denoted as $\Tilde{X}$ and $\overline{X} = {X}_{obs} \cup \overline{X}_{miss}$, respectively. 
Note that the indexing in this paper starts from zero, which is a common convention in many programming languages.

The hypergraph is a generalization of the graph where each hyperedge may contain an arbitrary number of nodes. We denote a hypergraph by $HG(\mathcal{V}, \mathcal{E})$, where $\mathcal{V}$ is a set of nodes, and $\mathcal{E}$ is the set of hyperedges. $\mathcal{E} \subseteq P^*(\mathcal{V})\setminus\{\varnothing\}$ where $P^*(\mathcal{V})$ is the power set on the nodes~\cite{chen2024hytrel}. We use $x_v$ to denote the cell w.r.t. the node $v$.
With the above notations, we define the task of mixed-type missing data imputation as follows. 

\begin{definition}{(Mixed-type Missing Data Imputation~\cite{wang2024missing, you2020handling}).} 
\textit{Mixed-type missing data imputation} aims to impute the unobserved elements in the raw data, i.e., $X_{miss}$, and make the imputed matrix $\Tilde{X}$ as close to the real complete dataset $\overline{X}$ as possible. The raw data matrix ${X}$ may contain numerical, categorical and text data.
\end{definition}

\vspace{-4mm}
% \subsection{State-of-the-arts}
\subsection{Large Language Models}

LLMs, such as GPT~\cite{brown2020language} and Llama~\cite{dubey2024llama} which contain billions of parameters, have demonstrated remarkable emergent behaviors and impressive zero-shot generalization capabilities across diverse tasks. The execution process of LLMs typically comprises four key components: 1) a tokenizer that segments input text into discrete tokens, 2) an LLM backbone that is used for feature propagation, 3) a projection head that predicts the next token, and 4) a decoding process that translates token IDs back into human-readable text. 

These LLMs are often auto-regressive, trained on large text corpora with the objective of maximizing the log-likelihood of the next word given the previous words. 

\begin{equation}
\label{equ:autoregressive}
\begin{aligned}
   \theta_{LLM} = \mathop{\arg\max}\limits_{\theta}\sum\limits_{i} \log P(t_i|t_{i-k},\cdots,t_{i-1};\theta)
\end{aligned}
\end{equation}
where $t_i$ stands the token and $k$ is the context window size.
There are two typical ways to adapt the model to new tasks and boost performance: supervised-fine-tuning (SFT) and in-context-learning.
For SFT, given a training query $T = \{t_0, t_1, \cdots, t_{s}\}$ and training target $y$, the objective is to maximize the following log-likelihood:
% \begin{equation}
% \label{equ:sft}
% \begin{aligned}
\(\sum_{(T, y)} \log P(y|t_{0}, t_{1},\cdots,t_{s})\).
% \end{aligned}
% \end{equation}
The in-context-learning-based methods handle new tasks during inference by constructing a prompt that includes examples (i.e., context). Given a set of example queries $\{T^0, T^1, \cdots, T^{w}\}$, targets $\{y^0, y^1, \cdots, y^{w}\}$, and a new query text $T^{w+1}$, a prompt $(T^0, y^0, \cdots, T^{w}, y^{w}, T^{w+1})$ is constructed and sent to the LLM to infer the results.

Methods employing both techniques have been adapted for missing data imputation. Specifically, Table-GPT~\cite{li2024table} and Jellyfish~\cite{zhang2024jellyfish} use SFT to fine-tune the language model on diverse tabular tasks, showing impressive performance gains compared to the original LLM. DFMs~\cite{narayan2022can} leverage the in-context-learning by constructing prompts with $k$-shots context (i.e., $k$ query-target pairs) for imputation, demonstrating the impressive performance of LLM.

However, these LLM-based methods are inherently designed to prioritize text data and cannot effectively utilize key intrinsic characteristics of tabular data, leading to unsatisfactory performance, especially when handling numerical data.

% \vspace{-2mm}
\section{The Overall Framework}
\label{sec:method}

In this section, we first introduce three key intrinsic characteristics of tabular data and prove its critical role for imputation. Then, we introduce the details of \netname.

\subsection{Intrinsic Characteristics of Tabular Data}
\label{sec:inductive_basees}

\noindent\textbf{Global-local information}.
The global-local information indicates that the value of the cell is influenced by both its neighboring cells (local context) and cells across the table (global context).
For example, as shown in Figure~\ref{fig:frameworks_overview}, the name of the president is relevant not only to their nation and term but also to the sequential relationship of terms. Based on this, we can infer that the missing value corresponds to the president succeeding President Trump.

\noindent\textbf{High-order relationship}.
The high-order relationship suggests that the value of the cell is influenced simultaneously by multiple columns as a whole. For example, as shown in Figure~\ref{fig:frameworks_overview}, the name of a president can be determined simultaneously by the nation and the term as a whole. However, neither the nation nor the term alone is sufficient to fully determine the value.

\noindent\textbf{Intra-column heterogeneity and intra-column homogeneity}. 
Features across columns can be highly diverse, and cells within the same column typically exhibit consistent semantics. For example, as shown in Figure~\ref{fig:frameworks_overview}, the name is text data and the nation is categorical data. Furthermore, the name remains consistent across rows.

\begin{figure*}
  \centering
  \includegraphics[width=0.90\linewidth]{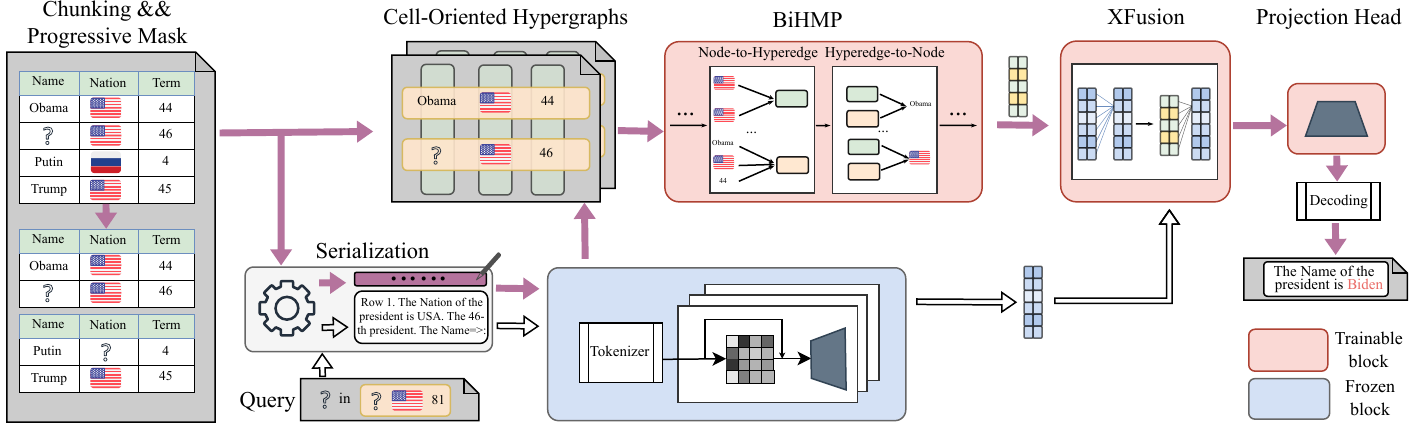}
  \vspace{-2mm}
  \caption{Framework overview of \netname }
  \vspace{-4mm}
  \label{fig:frameworks_overview}
\end{figure*}

Next, we formally define the imputation error.

\begin{definition}{(Imputation Error).}
    Given a model ${\theta_{X_{obs}}}$ trained on the observed data $X_{obs}$, the imputation error $\Psi({\theta_{X_{obs}}}, X_{miss})$ captures the expected performance of ${\theta_{X_{obs}}}$ on the test unobserved data $X_{miss}$, i.e., $\Psi({\theta_{X_{obs}}}, X_{miss})=\mathbb{E}[\|{\theta_{X_{obs}}}(X_{miss}) - \overline{X}_{miss}\|]$. When the context is clear, we denote $\theta_{X_{obs}}$ as $\theta$ to improve readability.
\end{definition}

We then have the following three theorems highlighting the importance of global-local information, high-order information and column patterns, respectively. The proofs are in Section~\ref{sec:analysis}.

\begin{theorem}
\label{theorem:global_local}
    Consider two imputation models, ${\theta^{g+l}}$ and ${\theta^{l}}$, where ${\theta^{g+l}}$ captures both global and local information in the latent space, and ${\theta^{l}}$ captures only the local information. Assuming that interactions of global and local information are independent, then we have:  
    \[
    \Psi({\theta^{g+l}}, X_{miss}) \leq \Psi({\theta^{l}}, X_{miss}),
    \]
    indicating that a model capable of capturing both global and local information achieves a lower imputation error.

\end{theorem}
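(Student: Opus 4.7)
The plan is to treat the global and local information as random variables in a probabilistic framework, and then use the fact that a model having access to strictly more information cannot do worse in expectation than one with less information, provided both models are near-optimal within their respective information sets. First I would introduce two random variables $G$ and $L$ encoding, respectively, the global table context and the local neighborhood context of a missing cell, and view the ground-truth value $\overline{X}_{miss}$ as a function $h(G, L, \epsilon)$ with $\epsilon$ denoting intrinsic noise. Under the theorem's hypothesis that the interactions of global and local information are independent, $h$ decomposes additively as $h(G, L, \epsilon) = h_G(G) + h_L(L) + \epsilon$. This decomposition is the structural consequence of the independence assumption and is what makes the subsequent argument tractable.

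Given this decomposition, I would characterize the two optimal imputers in the natural way: $\theta^{l}$, having access only to $L$, minimizes $\mathbb{E}[\|\cdot - \overline{X}_{miss}\|]$ over all $L$-measurable functions, while $\theta^{g+l}$ does the same over all $(G,L)$-measurable functions. The key step is then to invoke the tower property of conditional expectation, equivalently the fact that conditional expectation is the $L^2$-orthogonal projection: for the squared $L^2$ loss,
\[
\mathbb{E}\bigl[\|\theta^{g+l}(X_{miss}) - \overline{X}_{miss}\|^2\bigr] \le \mathbb{E}\bigl[\|\theta^{l}(X_{miss}) - \overline{X}_{miss}\|^2\bigr],
\]
since the $\sigma$-algebra generated by $(G,L)$ contains the one generated by $L$ alone. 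Concretely, $\theta^{l}$ is forced to predict $h_G(G)$ by a constant (its mean under the observed data), incurring an extra $\mathrm{Var}(h_G(G))$ term in the risk that $\theta^{g+l}$ eliminates. To move from the $L^2$ statement to the expected norm $\mathbb{E}[\|\cdot\|]$ in the definition of $\Psi$, I would apply Jensen's inequality together with the monotone relationship between $L^2$ and $L^1$ distances of conditional expectations, yielding the desired inequality $\Psi(\theta^{g+l}, X_{miss}) \le \Psi(\theta^{l}, X_{miss})$.

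The main obstacle I expect is the mismatch between the form of $\Psi$ and the natural $L^2$ theory of conditional expectation: the statement uses an unspecified norm under a single expectation rather than the squared $L^2$ risk, so the tower-property argument does not transfer verbatim. Resolving this cleanly will likely require either (i) restricting the norm to $L^2$, which matches the usual RMSE criterion used throughout the paper, or (ii) exploiting the independence-of-interactions assumption more directly to obtain an explicit additive decomposition $\theta^{l}(X_{miss}) = \theta^{g+l}(X_{miss}) - (h_G(G) - \mathbb{E}[h_G(G)])$ on the population level, after which the triangle inequality and the nonnegativity of $\mathbb{E}\|h_G(G) - \mathbb{E}[h_G(G)]\|$ deliver the conclusion without invoking a full projection argument. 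A secondary technical point will be handling the finite-sample training of $\theta^{X_{obs}}$: I would either assume sufficient training data so that empirical risk minimizers approximate the Bayes-optimal predictors, or phrase the theorem in the idealized population limit and note the approximation separately.
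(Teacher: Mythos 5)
Your proposal is correct in its essential logic but follows a genuinely different route from the paper. The paper argues information-theoretically: it invokes an information-bottleneck generalization bound under which $\Psi(\theta, X_{miss})$ scales as $\tilde{O}\bigl(\sqrt{(I(X_{miss};Z_{miss})+1)/n_{obs}}\bigr)$, and then shows, via the monotonicity of conditional entropy (its Lemma 4.1), that conditioning the latent representation on both $B^{g}$ and $B^{l}$ yields a conditional mutual information no larger than conditioning on $B^{l}$ alone; the ordering of the errors is then inherited from the ordering of these bounds. Your argument instead works at the level of statistical decision theory: the class of $L$-measurable predictors is contained in the class of $(G,L)$-measurable predictors, so the Bayes risk can only decrease when global information is added, with the additive decomposition $h_G(G)+h_L(L)+\epsilon$ making the eliminated $\mathrm{Var}(h_G(G))$ term explicit. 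Each approach buys something the other lacks. The paper's route at least nominally accounts for the train/test structure ($\theta$ is fit on $X_{obs}$ and evaluated on $X_{miss}$), which your population-level argument defers to a separate approximation step; conversely, you compare the risks themselves, whereas the paper compares upper bounds on the risks, which does not strictly imply an ordering of the errors --- a logical gap your projection argument avoids. Note also that your worry about the unspecified norm is not essential: since the predictor classes are nested, $\min_{f\in\mathcal{F}_{L}} R(f) \geq \min_{f\in\mathcal{F}_{G,L}} R(f)$ holds for any loss, so neither the tower property nor Jensen's inequality is strictly needed to obtain the inequality, only to characterize the minimizers; the real assumption you are leaning on is that both models are (near-)Bayes-optimal within their respective information sets, which you should state explicitly.
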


\begin{theorem}
\label{theorem:high_order}
    Consider two imputation models, ${\theta^{[0:r]}}$ and ${\theta^{[0:s]}}$, where ${\theta^{[0:r]}}$ captures interactions up to order $r$ in the latent space, and ${\theta^{[0:s]}}$ captures interactions up to order $s$, with $r > s$. 
    % Assuming that interactions of different orders are independent, then 
    We have  
    \[
    \Psi({\theta^{[0:r]}}, X_{miss}) \leq \Psi({\theta^{[0:s]}}, X_{miss}),
    \]
    indicating that the model capable of capturing higher-order interactions exhibits a lower imputation error.
\end{theorem}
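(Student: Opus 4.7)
The plan is to mirror the proof structure anticipated for Theorem~\ref{theorem:global_local} and reduce the statement to a comparison of residual approximation error after decomposing the target into interaction components of different orders. Intuitively, if $r>s$ then the hypothesis class of $\theta^{[0:r]}$ subsumes that of $\theta^{[0:s]}$, so $\theta^{[0:r]}$ cannot do worse at recovering the cell-wise target $\overline{X}_{miss}$. The challenge is formalizing ``captures interactions up to order $r$'' so that this containment is rigorous and the residual errors can be compared in expectation.

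First, I would adopt a functional ANOVA-style decomposition of the underlying data-generating function that maps observed context to an unobserved cell: write it as a sum $f=\sum_{k=0}^{d} f_k$, where $f_k$ is the component that depends on exactly $k$-way interactions among the feature coordinates (with $f_0$ the constant / marginal mean and $f_1$ the sum of single-column effects). Under a mild orthogonality assumption (e.g., the $f_k$'s are mutually uncorrelated under the data distribution, which is the standard Hoeffding / ANOVA assumption and parallels the independence hypothesis used for global/local decomposition in Theorem~\ref{theorem:global_local}), the squared error decomposes additively across interaction orders.

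Second, I would encode the capacity assumption: a model $\theta^{[0:k]}$ can represent any sum $\sum_{j=0}^{k} g_j$ with $g_j$ of interaction order at most $j$, and its optimal realization on $X_{obs}$ satisfies $\theta^{[0:k]}(X_{miss})=\sum_{j=0}^{k} f_j(X_{miss})$ up to the training residual on $X_{obs}$. Plugging this into the imputation error and invoking orthogonality yields
\begin{equation*}
\Psi(\theta^{[0:k]}, X_{miss}) \;=\; \mathbb{E}\Bigl\|\sum_{j=k+1}^{d} f_j(X_{miss})\Bigr\| \;=\; \sum_{j=k+1}^{d} \mathbb{E}\|f_j(X_{miss})\|,
\end{equation*}
so that $\Psi(\theta^{[0:r]},X_{miss}) = \sum_{j=r+1}^{d}\mathbb{E}\|f_j\|$ and $\Psi(\theta^{[0:s]},X_{miss}) = \sum_{j=s+1}^{d}\mathbb{E}\|f_j\|$. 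Since $r>s$, the former sum omits the nonnegative terms $j=s+1,\dots,r$, giving the desired inequality.

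The main obstacle will be the capacity assumption: the claim that $\theta^{[0:k]}$ exactly realizes the truncated expansion $\sum_{j\le k} f_j$ is idealized, since in practice training on $X_{obs}$ introduces a statistical error term. I would handle this by either (i) assuming sufficient training data so that the optimization error within each capacity class vanishes (a universal-approximation-style assumption in the spirit of the paper's other proofs), or (ii) expressing both $\Psi$ values as ``approximation error + estimation error'' and noting that the larger class $\theta^{[0:r]}$ has no larger approximation error, while the estimation error can be absorbed into a shared negligible term. A secondary subtlety is justifying orthogonality of the $f_k$ components for mixed-type tabular features; I would note that this is standard in functional ANOVA under a product-like reference measure, and the authors already invoke an analogous independence hypothesis in Theorem~\ref{theorem:global_local}, making the assumption consistent with the paper's framing.
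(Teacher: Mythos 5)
Your proposal is essentially sound as an idealized argument, but it takes a genuinely different route from the paper. The paper's proof is information-theoretic: it invokes an information-bottleneck-style generalization bound under which $\Psi(\theta_{X_{obs}}, X_{miss})$ scales as $\tilde{O}\bigl(\sqrt{(I(X_{miss};Z_{miss})+1)/n_{obs}}\bigr)$, writes the captured interactions of the richer model as a pair $B^{[0:r]}=(B^{[0:s]}, B^{[s+1:r]})$, and then reuses the proof of Theorem 3.1 verbatim: by the lemma $H(X\mid Y)\geq H(X\mid Y,Z)$, conditioning on the additional interaction information can only decrease the conditional mutual information $I(X_{miss};Z_{miss}\mid \cdot)$, hence the error bound. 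Your functional-ANOVA route instead works at the level of approximation error, decomposing the target into orthogonal interaction components and showing the larger hypothesis class leaves a residual that omits more nonnegative terms. What your approach buys is a concrete, checkable meaning for ``captures interactions up to order $k$'' and an explicit expression for the gap (the energy of the omitted components $f_{s+1},\dots,f_r$); what it costs is a stronger realizability assumption (that $\theta^{[0:k]}$ exactly recovers the truncated expansion on $X_{obs}$) and the Hoeffding orthogonality hypothesis, whereas the paper only needs the (admittedly informal) claim that the richer model's latent code conditions on strictly more side information. One technical slip to fix: the additive split $\mathbb{E}\bigl\|\sum_{j>k} f_j\bigr\| = \sum_{j>k}\mathbb{E}\|f_j\|$ holds for squared $L^2$ norms under orthogonality, not for the unsquared expected norm used in the paper's definition of $\Psi$; you should either state the result for squared error or argue via the contraction property of the orthogonal projections defining the truncation, which still yields the desired monotonicity.
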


\begin{theorem}
\label{theorem:column_pattern}
Consider two imputation models, \(\theta^{cp}\) and \(\theta\), where \(\theta^{cp}\) captures the column patterns including intra-column heterogeneity and intra-column homogeneity, while \(\theta\) does not. Then, we have:  
\[
\Psi(\theta^{cp}, X_{\text{miss}}) \leq \Psi(\theta, X_{\text{miss}}),
\]  
indicating that model \(\theta^{cp}\) capturing the column patterns achieves a lower imputation error.
\end{theorem}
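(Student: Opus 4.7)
The plan is to follow the same structural template used for Theorems 3.1 and 3.2: decompose the expected imputation error into pieces aligned with the structural property in question, and then argue that the column-aware model enjoys strictly more expressive freedom along those pieces. I would begin by writing
\[
\Psi(\theta, X_{miss}) \;=\; \sum_{j=0}^{d-1} \Pr[c(v)=j]\cdot \mathbb{E}\bigl[\,\|\theta(x_v)-\overline{x}_v\|\,\bigm|\,c(v)=j\,\bigr],
\]
where $c(v)\in\{0,\ldots,d-1\}$ denotes the column index of the missing cell modeled by node $v$. The same decomposition holds verbatim for $\theta^{cp}$, so it suffices to establish the per-column inequality $\mathbb{E}[\|\theta^{cp}(x_v)-\overline{x}_v\|\mid c(v)=j]\leq\mathbb{E}[\|\theta(x_v)-\overline{x}_v\|\mid c(v)=j]$ for each $j$, and then reassemble.

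Next I would encode the two hypotheses as function classes $\mathcal{H}$ (for $\theta$) and $\mathcal{H}^{cp}$ (for $\theta^{cp}$) and show the nesting $\mathcal{H}\subseteq\mathcal{H}^{cp}$. The intra-column heterogeneity assumption permits $\theta^{cp}$ to condition on the column index, so every column-agnostic predictor in $\mathcal{H}$ is representable in $\mathcal{H}^{cp}$ by simply ignoring that extra argument; minimizing the conditional risk over the strictly larger class then yields the per-column inequality. The intra-column homogeneity assumption additionally permits $\theta^{cp}$ to tie parameters across cells inside the same column, which, under the standard Bayes-optimal-predictor view, lets it target the column-conditional expectation $\mathbb{E}[\overline{x}_v\mid X_{obs}, c(v)=j]$, a quantity that $\theta$ cannot aim at without conflating semantically distinct columns.

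The hard part will be making the homogeneity step rigorous at the abstraction level adopted by Theorems 3.1 and 3.2. The heterogeneity step is essentially a function-class inclusion argument and is clean. The homogeneity step, however, is a variance-reduction statement, and I plan to handle it by assuming explicitly, mirroring the global/local independence hypothesis used in Theorem 3.1, that the shared column-level summary statistic aggregated by $\theta^{cp}$ is independent of the residual intra-column noise. Given that assumption, invoking Jensen's inequality on the conditional expectation inside the norm, i.e., a Rao--Blackwellization argument, pushes the per-column inequality through. Reassembling the per-column bounds with their weights $\Pr[c(v)=j]$ and applying monotonicity of the expectation then closes the chain and delivers $\Psi(\theta^{cp}, X_{miss})\leq \Psi(\theta, X_{miss})$, completing the proof.
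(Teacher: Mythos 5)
Your route is genuinely different from the paper's, and the difference matters. The paper proves this theorem (like Theorems 3.1 and 3.2) information-theoretically: it invokes the information-bottleneck bound under which $\Psi(\theta_{X_{obs}}, X_{miss})$ scales as $\tilde{O}\bigl(\sqrt{(I(X_{miss};Z_{miss})+1)/n_{obs}}\bigr)$, writes the mutual information of the column-aware model as $I(X_{miss};Z_{miss}\mid B^{cp},B^{base})$ versus $I(X_{miss};Z_{miss}\mid B^{base})$ for the baseline, and then uses the lemma $H(X\mid Y)\geq H(X\mid Y,Z)$ to show that conditioning on the additional captured pattern $B^{cp}$ cannot increase the conditional mutual information, hence cannot increase the error bound. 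Your proposal instead decomposes the risk by column and argues via function-class nesting plus a Rao--Blackwell step; neither ingredient appears in the paper's argument.

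The genuine gap is in the nesting step. By the paper's definition, $\Psi(\theta_{X_{obs}}, X_{miss})$ is the expected error on the \emph{unobserved} data of a model \emph{trained on the observed data} --- it is a generalization error, not an in-class infimum of the test risk. The claim that ``minimizing the conditional risk over the strictly larger class yields the per-column inequality'' is only valid if $\theta$ and $\theta^{cp}$ each attain the population-risk minimizer over their respective classes; for trained models, $\mathcal{H}\subseteq\mathcal{H}^{cp}$ guarantees at most a smaller \emph{training} error for $\theta^{cp}$ and says nothing about its error on $X_{miss}$ --- the larger class can overfit and do strictly worse. This is precisely the issue the paper's mutual-information route is built to avoid: there the comparison is made on a generalization bound that is monotone in the conditional mutual information, so adding captured structure provably does not inflate the bound. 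Your Rao--Blackwell step for homogeneity is sound at the Bayes-predictor level, but it inherits the same population-versus-trained-model mismatch. To close the argument you would either need to adopt the paper's information-bottleneck framing, or add an explicit assumption that both models are exact risk minimizers with respect to the test distribution --- an assumption the theorem statement does not grant.
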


\vspace{-3mm}
\subsection{Overview of \netname}
Guided by the above key intrinsic characteristics of tabular data, we propose \netname~for accurate mixed-type data imputation.
The overall architecture is illustrated in Figure~\ref{fig:frameworks_overview}.
Given the incomplete raw dataset, \netname~ first utilizes the chunking technique and progressive masking to process data. Then, cell-oriented hypergraphs are constructed. 
\netname~utilizes the tokenizer and LLM backbone for feature initialization and propagation, projecting both raw data and query prompts from the text space to the embedding space. 
These obtained features are processed through the BiHMP module, which iteratively runs the node-to-hyperedge layer and the hyperedge-to-node layer to aggregate both local and global information. The aggregated feature is then fused with the embedded prompts using an XFusion module. 
Finally, a projection head maps the predicted data back to token space, and a decoding process is employed to translate the tokens into human-readable results.

\vspace{-2mm}

\subsection{Cell-Oriented Hypergraph Modeling}
\label{sec:hypergraph_modeling}
\noindent\textbf{Motivation}. 
To capture the structural properties of tabular data, existing imputation methods often employ graph-based approaches, such as bipartite graphs~\cite{zhong2023data, you2020handling} and similarity graphs~\cite{chen2023gedi}. While effective, these methods fall short in comprehensively representing the complex interactions and higher-order relationships inherent in tabular data. 
To better capture the intrinsic characteristics of tabular data, we adopt a hypergraph-based approach for modeling. 
While some hypergraph-based methods, such as the cell-oriented hypergraph in~\cite{chen2024hytrel} and the value-oriented method in~\cite{du2022learning}, have been introduced for general tabular representation learning, they have not yet been employed for imputation.
The value-oriented approach models each distinct value as a node, with nodes in the same row sharing a hyperedge. However, this method struggles to deal with mixed-type data and effectively handles missingness. 
Moreover, missing data imputation focuses on the granularity of individual cells, as missingness occurs at the cell level. 
Therefore, we adopt the cell-oriented hypergraph following ~\cite{chen2024hytrel}.
It is important to note that the work in~\cite{chen2024hytrel} focuses on tabular representation learning, our work is specifically designed for missing data imputation. Additionally, their method treats hypergraph neural network and language models as separate components, whereas we introduce BiHMP and Xfusion, which serve as the adapters for LLM, to better aggregate information and combine both advantages.

% provide more details.
Given a tabular dataset $X$ with $n$ samples, each containing $d$ features, we construct a hypergraph $HG(\mathcal{V}, \mathcal{E})$ as follows:
\begin{itemize}[leftmargin=10pt, topsep=1pt]
\item For each cell $x_{ij} \in X$, we create a corresponding node $v_{idx} \in \mathcal{V}$, where $idx = i*d+j$.
\item For nodes in the same column (i.e., nodes corresponding to $\{x_{0j}, x_{1j}, \cdots\}$), we construct a hyperedge $e_{j} \in \mathcal{E}$.
\item Similarly, nodes in the same row (i.e., nodes corresponding to $\{x_{i0}, x_{i1}, \cdots\}$) form a hyperedge $e_{i+d} \in \mathcal{E}$.
\end{itemize}

\vspace{-2mm}
\subsection{Feature Encoding}
As the neural network models need vectorized inputs, we introduce a vectorization technique to project data from the text space into the vector space. We use the LLM backbone to encode the text data.

\noindent\textbf{Serialization}. 
In order to obtain the embedding, we need to feed the proper prompt text to the LLM backbone.
We use the following serialization prompt for node data. Here, EOS (i.e., end of the sentence) is a special token to denote the end of the prompt.
\[
  \text{Row i, col\_name=>\{node data\} EOS }
\]
To provide additional context for the prompt when making a query, we also concatenate the other observed data in the same row along with their column names to the prompt.
We use the following serialization prompt for hyperedge data.
\[
  \text{This is row (or col): i (or col\_name) EOS}
\]

\noindent\textbf{Tokenization}. 
Then, the text is sent into the tokenizer to split the prompt into tokens. There are many commonly used tokenizers like BPE~\cite{berglund2023formalizing} and SentencePiece~\cite{kudo2018sentencepiece}. In our experiments, we use the SentencePiece tokenizer associated with Llama2.
% \vspace{-1mm}

\begin{equation}
\tag*{}
\label{equ:tokenizer}
\begin{aligned}
\{t_0, t_1, \cdots, t_s\} = \text{tokenizer}(\text{prompt-text})
\end{aligned}
\end{equation}

\noindent\textbf{Propagation of LLM backbone}.
The tokens are processed through the LLM backbone for feature initialization and propagation. Initially, the LLM backbone performs a lookup in the embedding matrix, a predefined matrix with dimensions $|\text{Vocab}| \times d$, where $|\text{Vocab}|$ denotes the vocabulary size and $d$ represents the hidden dimension. Each token ID corresponds to a specific row in this embedding matrix, serving as an index to retrieve the corresponding embedding vector.
These embedding vectors are then fed into the transformer layer \cite{vaswani2017attention} for feature propagation.

% \vspace{-1mm}
\begin{equation}
\tag*{}
\label{equ:tokenizer}
\begin{aligned}
z_p:\{z_{t_0}, z_{t_1}, \cdots, z_{t_s}\} = \text{LLM-backbone}({t_0, t_1, \cdots, t_s})
\end{aligned}
\end{equation}
Each token $t_i$ is encoded as a feature representation $z_{t_i}$. 
For the prompt text corresponding to node $v_i$ (or hyperedge $e_i$), we use the representation of the last token as the initial feature $z^0_{v_i}$ (or $z^0_{e_i}$, respectively).
For numerical data, we directly encode the input as its value to avoid unnecessary computational overhead. For categorical features, we employ label encoding, which assigns unique integers to each category, aligning with recent works \cite{hancock2020survey, zhao2023transformed, wang2024missing, wang2022global, miao2022experimental}.

\subsection{Bidirectional High-order Message Passing}

\noindent\textbf{Motivation}. 
In tabular data, interactions often involve multiple entities simultaneously. For instance, multiple cells within the same row and column exhibit high-order interactions. To capture these complexities, Hypergraph Neural Networks (HGNNs) (e.g., \cite{feng2019hypergraph}) can be employed for message passing, but they typically come with high computational costs. While a hypergraph-structure-aware transformer was proposed in~\cite{chen2024hytrel}, it not only faces efficiency challenges but also tends to lose fine-grained local information due to its maximal invariant properties.
A simplified and fast HGNN~\cite{Tang2024SimplifyingHN} can accelerate training, but it only handles node information, overlooking hyperedges.
Therefore, we propose BiHMP, a novel Bidirectional High-order Message Passing network that operates bidirectionally between nodes and hyperedges in the constructed hypergraph. BiHMP comprises two key components: 1) Node-to-Hyperedge layer, which propagates information from each node to its associated hyperedges through a linear layer.
2) Hyperedge-to-Node layer, which propagates information from hyperedges back to their constituent nodes through a linear layer. Through the iterative application of these two layers, BiHMP captures high-order and multi-hop relationships within the data, enhancing its ability to model complex interactions.

\noindent\textbf{Node-to-Hyperedge}. For each hyperedge $e_j \in \mathcal{E}$ and node $v_i \in e_j$, with their respective embeddings $z^{l}_{e_j}$ and $z^{l}_{v_i}$ in $l$-th Node-to-Hyperedge layer, we use the following equation to learn to update the representation of hyperedge:

\begin{equation}
\label{equ:v2e}
\begin{aligned}
    z_{e_j}^{temp} &= \frac{1}{|e_j|}\sum_{v_i \in e_j}\sigma \left( f_1^l(z^{l}_{v_i})\right) \\
    z_{e_j}^{l+1} &= \sigma \left( f_2^l\left(\text{CONCAT}\left( z^{l}_{e_j}, z_{e_j}^{temp}\right)\right)\right)
\end{aligned}
\end{equation}
where $f_1^l(\cdot)$ and $f_2^l(\cdot)$ represents learnable linear transformation functions. 
$\sigma(\cdot)$ is a non-linear activation function (ReLU in \netname). $\text{CONCAT}(\cdot)$ is the concatenation function that stacks multiple inputs into a single, longer vector. For each node in the hyperedge, we apply a learnable transformation followed by a non-linear activation $\phi(\cdot)$. We then compute the mean of these transformed node features to obtain $z_{e_i}^{temp}$, which captures the average information from all nodes in the hyperedge. This temporary representation is concatenated with the representation of the hyperedge from the previous layer, $z_{e_j}^l$. Finally, we apply another transformation layer and non-linear activation layer to this concatenated vector to obtain the updated hyperedge representation $z_{e_j}^{l+1}$.

\noindent\textbf{Hyperedge-to-Node}. The Hyperedge-to-Node layer updates node representations by aggregating information from their incident hyperedges. For each node $v_i \in \mathcal{V}$ with its two incident hyperedges i.e., the row-level hyperedge $e_{v_i}^r$ and column-level hyperedge $e_{v_i}^c$, we update the node representation by the following equation in the $l$-th Hyperedge-to-Node layer:
\begin{equation}
\label{equ:e2v}
z_{v_i}^{l+1} = \sigma\left(f^l_3\left(\text{CONCAT}\left[z^{l}_{v_i}, z^{l}_{e_{v_i}^c}, z^{l}_{e_{v_i}^r}\right]\right)\right)
\end{equation}
where $f_3^l(\cdot)$ is a learnable linear function. The Hyperedge-to-Node layer first concatenates the embedding of the node $z^l_{v_i}$ with the embeddings of all hyperedges that contain the node, i.e., $z^{l}_{e_{v_i}^c}$ and $z^{l}_{e_{v_i}^r}$. 
The concatenated representation is then processed through a learnable linear transformation function $f_3^l(\cdot)$.
Finally, a non-linear activation function $\sigma(\cdot)$ is applied to obtain the updated node representation in $(l+1)$-th layer $z_{v_i}^{l+1}$. Note that, the main component in our BiHMP is the linear transformation which is efficient yet effective for information propagation.

\begin{algorithm}[t]
\caption{Forward Propagation of \netname}
\label{algo:forwardpropagation}
\LinesNumbered
\DontPrintSemicolon
\KwIn{Raw dataset $X$, Mask $M$, model parameters $\theta_{\text{\netname}}$. }
\KwOut{The imputed dataset $\tilde{X}$.}

\tcp*[l]{Hypergraph Construction and Feature Encoding.}

$HG(\mathcal{V}$, $\mathcal{E})$ $\leftarrow$ Construct hypergraph from 
$X$; $\tilde{X}\leftarrow X$\;

\For{$v_i \in \mathcal{V}$, $e_j \in \mathcal{E}$}{
    $\{t_0, \cdots, t_s\}$ $\leftarrow$ Serilization and tokenization. \;
    $z_p:\{z_{t_0}, z_{t_1}, \cdots, z_{t_s}\} \leftarrow \text{LLM-backbone}({t_0, t_1, \cdots, t_s})$ \;
    $z^0_{v_i}\ \text{or}\ z^0_{e_j} \leftarrow z_{t_s}$\;
}

% While
\tcp*[l]{High-order Message Passing.}
\For{$l=0,\cdots, l_{max}$}{
    $z_{e_j}^{temp} = \frac{1}{|e_j|}\sum_{v_i \in e_j}\sigma \left( f_1^l(z^{l}_{v_i})\right)$
    \
    
    $z_{e_j}^{l+1} = \sigma \left( f_2^l\left(\text{CONCAT}\left( z^{l}_{e_j}, z_{e_j}^{temp}\right)\right)\right)$\
    
    $z_{v_i}^{l+1} = \sigma\left(f^l_3\left(\text{CONCAT}\left[z^{l}_{v_i}, z^{l}_{e_{v_i}^c}, z^{l}_{e_{v_i}^r}\right]\right)\right)$
}
% h_{e_j}^{l+1} &= \phi \left( f_2^l\left(\text{CONCAT}\left( h^{l}_{e_j}, h_{e_j}^{temp}\right)\right)\right)

\tcp*[l]{Xfusion and projection head.}
\For{$v_i$ with corresponding mask being 0}{
    $\tilde{x}_{v_i}= \varnothing$ \;
    \While{True}{
    Update $z_p$ with $\tilde{x}_{v_i}$\;
    $z_p^{temp} = \text{Attn}(z_p W_Q, z_p W_K, z_p W_V; w_1)$ \;
    $z_{output} = \text{Attn}(z_p^{temp} W_Q, z_g W_K, z_g W_V; w_2)$\;
    $o = f^{head}(z_{output})$[-1]; $\tilde{x}_{v_i}.append(o)$\;
    \If{Decode(o) = EOS or $x_{v_i}$ is num. or cate.}{
    Break\;
}
}       
}
\Return $\tilde{X}$\;
\end{algorithm}

\subsection{XFusion Block and Projection Head}

\noindent\textbf{XFusion block}. After encoding the prompt and aggregating local and global information by BiHMP, it is crucial to effectively integrate these features for accurate imputation. We designed XFusion to synthesize the rich information derived from the prompt feature (denoted as $z_p=\{z_{t_0}, z_{t_1}, \cdots, z_{t_s}\} $) with the features extracted from the hypergraph representation (denoted as $Z_g$ where $z_g \in Z_g$ is a concatenation of $z^{l_{max}}_{e_{v_i}^c}$ and $z^{l_{max}}_{e_{v_i}^r}$ for imputing node $v_i$). XFusion is based on the attention mechanism~\cite{vaswani2017attention}, a critical component of LLMs. The attention is computed as follows:

\begin{equation}
\text{Attn}(Q, K, V; w) = w \cdot \text{Softmax}\left(\frac{QK^T}{\sqrt{d_K}}\right)V
\end{equation}
where $Q$, $K$ and $V$ are the input feature matrices, $w$ is a learnable weight matrix, and $d_K$ is the dimensionality of the vectors in $K$.

Building on this attention mechanism, our XFusion module leverages both token embeddings and hypergraph-derived embeddings to construct a rich, context-aware representation. The forward process of XFusion is as follows:

\begin{equation}
\label{equ:fusion}
\begin{aligned}
z_p^{temp} &= \text{Attn}(z_p W_Q, z_p W_K, z_p W_V; w_1) \\
z_{output} &= \text{Attn}(z_p^{temp} W_Q, z_g W_K, z_g W_V; w_2)
\end{aligned}
\end{equation}

Here, $W_Q$, $W_K$ and $W_V$ are learnable weight matrices. We first calculate the self-attention of the prompt embedding. We then use the obtained prompt feature $h_p^{temp}$ as the query and the graph feature $h_g$ as both the key and value matrices. The attention mechanism between prompt embeddings and graph embeddings allows the model to effectively align and integrate the prompt information with the local and global information from the graph embeddings.

\noindent\textbf{Projection head}. The obtained $z_{output} \in \mathbb{R}^{(s+1)\times d_h}$ is in a high-dimensional space, and the projection head is a key component that maps this high-dimensional embedding to a target space. It is modeled as a linear layer $f^{head}: z_{output}  \mapsto o \in \mathbb{R}^{(s+1)\times d_o}$. For numerical and categorical data, $d_o=1$, as it requires only a single value as the imputation. In contrast, for text data, the task involves predicting a token ID, making it a classification task. Hence, $d_o = |\text{Vocab}|$, where $|\text{Vocab}|$ denotes the size of the vocabulary.

With the above introduction of key components in \netname, we now show that the \netname~captures the key intrinsic characteristics analyzed in Section~\ref{sec:inductive_basees}. The proof is in Section~\ref{sec:analysis}.

\begin{theorem}
\label{theorem:all_in_one}
    The \netname~framework captures the global-local information, high-order relationship, inter-column heterogeneity and intra-column homogeneity for imputation. 
\end{theorem}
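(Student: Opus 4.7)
The plan is to decompose the theorem into its four asserted properties and verify each one constructively by tracing how the corresponding signal flows through the cell-oriented hypergraph, the BiHMP stack, and the XFusion block. Since the statement is qualitative (``captures''), I expect the proof to be an architectural argument rather than a quantitative bound: for each property I will identify the specific operator or structural element in \netname~that realizes it, and then argue that the composition preserves the property end-to-end.

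First I would handle the high-order relationship. By the construction in Section~\ref{sec:hypergraph_modeling}, every row and every column becomes a single hyperedge $e$ with $|e|\ge 2$, so that a hyperedge simultaneously incident to $k$ nodes encodes a genuine $k$-ary interaction rather than $\binom{k}{2}$ pairwise ones. I would then show that one round of Equations~\eqref{equ:v2e} and~\eqref{equ:e2v} produces a node embedding $z_{v_i}^{l+1}$ that is a non-linear function of all nodes sharing either of $v_i$'s two hyperedges, i.e., of the entire row and column of $v_i$ as a whole; invoking Theorem~\ref{theorem:high_order}, this establishes interactions of order up to $\max(n,d)$ per layer. Stacking $l_{max}$ layers then lifts this to multi-hop high-order dependencies.

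Second I would treat global-local information and the column patterns together, since both follow from the bidirectional propagation and the hyperedge typing. For \emph{local} information, the self-attention branch of XFusion (Equation~\eqref{equ:fusion}, first line) over the prompt tokens $z_p$ retains the cell-level serialization together with its observed in-row context, while the initial embedding $z^0_{v_i}$ is drawn directly from the LLM-encoded cell text. For \emph{global} information, iterating BiHMP propagates any node's signal through its row hyperedge, then into neighboring column hyperedges of other cells in that row, and so on; after $l_{max}$ rounds the receptive field covers the whole connected component of the hypergraph, which by construction is the entire table. XFusion then fuses the two streams by taking $z_p^{temp}$ as query and $z_g$ (global hypergraph context) as key/value, so that the final $z_{output}$ explicitly contains both levels, satisfying the hypothesis of Theorem~\ref{theorem:global_local}.

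Third, for inter-column heterogeneity and intra-column homogeneity, I would point out that the serialization template for hyperedges ``\textit{This is row (or col): i (or col\_name)}'' supplies each column-hyperedge with a distinct textual identity that is encoded by the LLM into $z^0_{e_j}$, so different columns start from different subspaces and the learnable $f_2^l$ preserves this separation; this yields heterogeneity. Conversely, all cells in column $j$ share the same hyperedge $e_j$ and hence receive in Equation~\eqref{equ:e2v} a common summand $z^{l}_{e^c_{v_i}}$, which acts as a shared bias enforcing semantic consistency across the column; this yields homogeneity. Combined with Theorem~\ref{theorem:column_pattern} this establishes the last two clauses.

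The hard part, I expect, will not be any single one of the four arguments but rather making precise what ``captures'' means so that the proof is more than a restatement of the architecture. My plan is to use the latent-space formalism already adopted in the statements of Theorems~\ref{theorem:global_local}--\ref{theorem:column_pattern}: for each property $P$, exhibit a measurable functional of the hidden state (e.g., the dependence of $z_{v_i}^{l_{max}}$ on a designated subset of input cells, or the non-degeneracy of the map $e_j \mapsto z_{e_j}^0$) whose non-triviality is equivalent to $P$ holding, and then verify that functional on the explicit update equations. The remaining risk is simply ensuring that the non-linearities $\sigma$ and the concatenations do not collapse the relevant subspaces, which follows from the standard universal-approximation assumption on $f_1^l, f_2^l, f_3^l$ already implicit in the model.
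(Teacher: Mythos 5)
Your proposal is correct and follows essentially the same route as the paper's own proof: a property-by-property architectural argument that points to the row/column hyperedges for high-order interactions, two-hop reachability through BiHMP for global-local coverage, and the node/column-hyperedge structure for inter-column heterogeneity and intra-column homogeneity. The paper's version is considerably terser (it does not formalize ``captures'' or invoke Theorems~\ref{theorem:global_local}--\ref{theorem:column_pattern} explicitly), so your added detail is a refinement rather than a different approach.
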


The overall forward process of \netname~is summarized in Algorithm~\ref{algo:forwardpropagation}. \netname~takes the raw dataset $X$ and mask $M$ as inputs and outputs the imputed dataset $\tilde{X}$. We first construct a hypergraph according to the process introduced in Section~\ref{sec:hypergraph_modeling} (Line 1). 
It then initializes the feature of each node and hyperedge (Lines 2 to 5). The initialized features are then propagated by the high-order message-passing framework (Lines 6 to 9). It contains $l_{max}$ layers and each layer contains both the Node-to-Hyperedge layer (Lines 7 and 8) and Hyperedge-to-Node (Line 9). 
The aggregated feature and the prompt representation are then fused by the Xfusion for projection (Lines 10 to 18). Prompt embedding is first updated by previous predicted $\tilde{x}_{v_i}$ (Line 13). The Xfusion is applied to the obtained features (Lines 13 to 14). The projection head is applied to predict the imputation, using the last token as the newly predicted token (Line 16). If the predicted token is EOS or its type is numerical or categorical, it terminates the auto-regressive process (Lines 17 to 18).
At last, the algorithm outputs the imputed matrix.

\vspace{-2mm}
\section{Training and Analysis}
\label{sec:analysis}

\subsection{Training Objectives}
\noindent\textbf{Motivation}. The training objective guides the model towards desired behaviors and outputs during the learning process.
For the text data, it may contain an arbitrary length of tokens. Therefore, it is often modeled as the classification of the token ID of the next token and is processed in an auto-regressive manner. For the numerical data and the label-encoded categorical data, the imputation is often modeled as the task of regression. 

The learning process for text data is handled using an auto-regressive approach. 
For a sequence of tokens $(t_0, t_1, \cdots, t_s)$, an autoregressive model tries to predict the next element in the sequence, given all the previous tokens. This objective is based on Cross Entropy (CE)~\cite{de2005tutorial, wang2024neural} as follows:
\begin{equation}
\label{equ:loss_text}
\begin{aligned}
    & \mathcal{L}(T;\theta) = P(t_0) \cdot P(t_1|t_0) \cdots P(t_s|t_0, t_1, \cdots, t_{s-1})\\
    &= \prod_{i=0}^s \textrm{CE}(t_i, \theta(t_0\cdots t_{i-1})) 
    = \prod_{i=0}^s \sum_{j=0}^{d_o}-t_{i}[j]\cdot\textrm{log}(\theta(t_0\cdots t_{i-1})[j])
    % &= \prod_{i=0}^s 
    % &= \prod_{i=0}^s P(t_i | t_0,\cdots, t_{i-1}) 
\end{aligned}
\end{equation}
Where $t_{i}[j]$ and $\theta(\cdot)[j]$ are the probabilities of corresponding values belonging to $j$-th class.

The learning objective for numerical and categorical data is formulated using the Huber Loss~\cite{huber1992robust}. For a true value $x_{ij}$ and predicted value $\Tilde{x}_{ij}$, Huber Loss is defined:
\begin{equation}
\label{equ:loss_huber}
\mathcal{L}(x_{ij}, \Tilde{x}_{ij}) = \begin{cases}
    \frac{1}{2}(x_{ij} - \Tilde{x}_{ij})^2 & \text{for } |x_{ij} - \Tilde{x}_{ij}| \leq \delta \\
    \delta|x_{ij} - \Tilde{x}_{ij}| - \frac{1}{2}\delta^2 & \text{otherwise}
\end{cases}
\end{equation}
where $\delta$ is a positive hyper-parameter. Compared to other loss functions, Huber Loss offers the following advantages: 1) For errors smaller than $\delta$, Huber Loss behaves like Mean-Squared-Error (MSE), being sensitive to small errors; 2) For errors larger than $\delta$, it behaves like Mean-Absolute-Error (MAE), being less sensitive to potential outliers. Moreover, Huber Loss is differentiable at all points, which is beneficial for optimization. By adjusting the value of $\delta$, it is flexible to balance between MSE and MAE, adapting to different contexts. We set $\delta=1$ suggested by the experiment in Section~\ref{sec:experiment}.

\subsection{Training Pipeline}

\noindent\textbf{Motivation}. Previous works in imputation~\cite{you2020handling,zhao2023transformed,yoon2018gain,wang2024missing} typically train a separate model for each dataset. While effective, this practice significantly increases computational costs, as training a model can be time-intensive. Additionally, models trained on individual datasets often struggle to generalize to other datasets. A common alternative in the field of LLM is the "pre-train and finetune" strategy. However, the variability in table sizes and the propensity of neural models to overfit present challenges.
In this work, we follow the "pre-train and finetune" paradigm for imputation, introducing two plug-and-play optimizations: the chunking technique and the progressive masking technique. 

\noindent\textbf{Optimization 1: Chunking technique}. We split the entire dataset into smaller, uniformly sized chunks (512 rows in each chunk in our experiments). This approach ensures that all chunks are processed efficiently and in parallel, enabling batch training and reducing the heavy IO time associated with handling large and irregularly sized datasets. By transforming the dataset into manageable chunks, we also facilitate more effective memory utilization, preventing bottlenecks during model training.

\noindent\textbf{Optimization 2: Progressive masking technique}. Starting with the raw dataset, we gradually increase the proportion of masked data. The newly masked ratio is set from $\kappa$ to $\kappa + 30\%$ where $\kappa$ is hyperparameter and is set to 35\% suggested by the experiments in Section~\ref{sec:experiment}. As training progresses, we mask more data, gradually exposing the model to more challenging imputation scenarios. This progressive masking strategy allows the model to incrementally learn patterns from simpler to more complex missing data scenarios and thus improve the generalization ability.

We summarize the overall pre-training pipeline in Algorithm~\ref{algo:training_procedure}. Following the SFT, the fine-tuning process has a similar procedure to the pre-training. The pre-training stage utilizes all available datasets, while fine-tuning focuses on specific datasets.
The pipeline inputs training datasets, chunk size, batch size, initial model parameters, learning rate and the number of epochs. It first splits the training datasets into batched chunks (Line 1) and initializes the optimizer with the given learning rate (Line 2). The model is trained for $\textit{Epochs}$ epochs (Lines 3–12).
During each epoch, we mask more data, and for each batch, the algorithm performs forward propagation (as described in Algorithm~\ref{algo:forwardpropagation}) to compute the imputation results (Line 6). For numerical or categorical data, the loss is calculated using Eqn~\ref{equ:loss_huber}, while for text data, the loss is computed using Eqn~\ref{equ:loss_text}.
After processing all cells in a chunk, the model parameters are updated using the optimizer and the loss (Line 12). Finally, the learned parameters are returned (Line 13).

\begin{algorithm}[t]
\captionsetup{singlelinecheck=false} % 防止标题换行
\captionsetup{margin={0pt,5em}}
% \caption{ Constrained BFS for community identification}
\caption{\parbox{\linewidth}{Pre-Training Pipeline}}
\label{algo:training_procedure}
\LinesNumbered
\DontPrintSemicolon
\KwIn{The tabular datasets $\mathcal{T}$ with its masks $\mathcal{M}$, chunk size $c_{size}$, batchsize $b_{size}$, model parameters $\theta_{\text{\netname}}$, learning rate $\varphi$, Epoch num $Epochs$, mask rate $\kappa$. }
% \KwOut{The learned.}

$\{X^c, M^c\}\leftarrow$Split $\mathcal{X}, \mathcal{M}$ into batched chunks by $c_{size}$ \& $b_{size}$\;
Initialize optimizer $opt_{\theta}$ with learning rate $\varphi$\;

\For{$Epoch \in \{0, 1,\cdots, Epochs\}$}{
    $^{\prime}M^c \leftarrow $ mask $(\kappa+\frac{Epcoh}{Epochs}\times 0.30)$ of $M^c$ \;
    \For{each $X^c \in \{X^c\}$}{
    $\Tilde{X}^c  \leftarrow$ Algorithm~\ref{algo:forwardpropagation}($X^c,\  ^{\prime}M^c,\ \theta_{\text{\netname}}$)\;
    \For{each $x_{ij} \in T^c$ and $M^c_{ij} = 0$}{
    \If{$x_{ij}$ being numerical or categorical}{
         $\mathcal{L} \leftarrow $ Compute loss by Eqn~\ref{equ:loss_huber}\;
    }
    \Else{
        $\mathcal{L} \leftarrow $ Compute loss by Eqn~\ref{equ:loss_text}\;
    }
     }
     Update $\theta_{\text{\netname}}$ by $opt{\theta}$ with loss $\frac{\mathcal{L}}{\left|X_c\right|}$.\;
    }
}
Return $\theta_{\text{\netname}}$

\end{algorithm}

\vspace{-2mm}
\subsection{Complexity Analysis}

The feature encoding process requires encoding $|\mathcal{V}|+|\mathcal{E}|$ prompts, with the time complexity of LLM inference being $O(l_{trans}\times t_{max}^2\times d_{LLM})$~\cite{zhou2024survey}, where $l_{trans}$ is the number of transformer layers, $t_{max}$ is maximum number of tokens, and $d_{LLM}$ is the dimension of embeddings in LLM. Therefore, the total time complexity of feature encoding is $O((|\mathcal{V}|+|\mathcal{E}|)\times l_{trans}\times t_{max}^2\times d_{LLM})$. 

The training pipeline consists of $L$ layers of Node-to-Hyperedge and Hyperedge-to-Node operations with complexities $O(d_{LLM}\times |\mathcal{V}|+d_{LLM}\times |\mathcal{E}|)$ and $O(d_{LLM}\times |\mathcal{E}|)$, respectively. An attention-based fusion block requires $O(t_{max}^2\times d_{LLM})$, and a linear projection head needs $O(d_{LLM})$ complexity. Accordingly, the overall time complexity of the training pipeline is $O(Epochs\times t_{max}\times(l_{max}\times(2\times d_{LLM}\times |\mathcal{V}|+d_{LLM}\times |\mathcal{E}|)+t_{max}^2\times d_{LLM}+d_{LLM}))$.

\vspace{-2mm}
\subsection{Theoretical Proofs}
In this part, we present the proofs for Theorem~\ref{theorem:global_local}, Theorem~\ref{theorem:high_order}, Theorem~\ref{theorem:column_pattern} and Theorem~\ref{theorem:all_in_one}. We first introduce the concepts of entropy and mutual information which are the base of the proof.

\begin{definition}
    $H(X)=- \sum_{x \in {X}} p(x) \log p(x)$ represents the entropy that quantifies the average level of information associated with $X$.
    $I(X;Y)=H(X)-H(X|Y)$ is the mutual information between the $X$ and $Y$. 
\end{definition}

% \vspace{-2mm}
\begin{lemma}
\label{lemma:entropy_relation}
    $H(X|Y) \geq H(X|Y, Z))$ and the equation get if the information of $Z$ is encoded in $Y$.
\end{lemma}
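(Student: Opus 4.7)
The plan is to establish the inequality by identifying the gap $H(X|Y) - H(X|Y,Z)$ as a conditional mutual information, and then invoking its non-negativity. First, I would apply the chain rule of entropy to write $H(X|Y) = H(X,Y) - H(Y)$ and $H(X|Y,Z) = H(X,Y,Z) - H(Y,Z)$; subtracting and regrouping yields
\[
H(X|Y) - H(X|Y,Z) = H(Z|Y) - H(Z|X,Y) = I(X;Z|Y),
\]
so the desired inequality reduces to showing $I(X;Z|Y) \geq 0$.

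Next, for the non-negativity step, I would use the identity $I(X;Z|Y) = \sum_{y} p(y)\, D_{\mathrm{KL}}\!\bigl(p(x,z|y)\,\big\|\,p(x|y)p(z|y)\bigr)$ and apply Gibbs' inequality (a direct consequence of Jensen's inequality applied to $-\log$) to conclude that each KL divergence term is non-negative. Since $p(y) \geq 0$, the weighted sum is non-negative as well, which establishes the main inequality.

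For the equality clause, I would formalize the phrase ``the information of $Z$ is encoded in $Y$'' as meaning that $Z$ is (almost surely) a deterministic function of $Y$, i.e.\ $H(Z|Y) = 0$. Since conditioning never increases entropy, $0 \leq H(Z|X,Y) \leq H(Z|Y) = 0$, so $H(Z|X,Y) = 0$ as well. Plugging these into the identity above gives $I(X;Z|Y) = 0$, which yields the equality $H(X|Y) = H(X|Y,Z)$.

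The main obstacle, such as it is, will be pinning down the precise interpretation of ``encoded in $Y$''; once this is taken to mean that $Z$ is a measurable function of $Y$, the remainder of the argument reduces to standard information-theoretic manipulations. No deep machinery is required beyond the chain rule for entropy and the non-negativity of KL divergence, so the lemma is essentially a textbook result being cited to unlock the later Theorems~\ref{theorem:global_local}--\ref{theorem:column_pattern}.
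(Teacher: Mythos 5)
Your proof is correct and is, in fact, more careful than the paper's own argument. The paper uses only the single chain-rule decomposition $H(X,Z\mid Y) = H(Z\mid Y) + H(X\mid Y,Z)$ and then asserts that $H(Z\mid Y)\ge 0$ yields the inequality; but that step only gives $H(X\mid Y,Z)\le H(X,Z\mid Y)$, and since $H(X,Z\mid Y)\ge H(X\mid Y)$ this does not by itself bound $H(X\mid Y,Z)$ by $H(X\mid Y)$. What is actually needed is the second decomposition $H(X,Z\mid Y) = H(X\mid Y) + H(Z\mid X,Y)$, so that the gap becomes $H(X\mid Y) - H(X\mid Y,Z) = H(Z\mid Y) - H(Z\mid X,Y) = I(X;Z\mid Y)$, whose non-negativity must then be established independently. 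This is exactly what you do: you identify the gap as a conditional mutual information and prove it non-negative via the KL-divergence representation and Gibbs' inequality. Your treatment of the equality clause --- formalizing ``the information of $Z$ is encoded in $Y$'' as $H(Z\mid Y)=0$, deducing $H(Z\mid X,Y)=0$ and hence $I(X;Z\mid Y)=0$ --- is also a genuine addition, since the paper leaves that part unargued. In short, your route is the standard ``conditioning reduces entropy'' argument carried out in full; it reaches the same statement but supplies the step the paper's two-line sketch omits.
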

\begin{proof}
    \(H(X, Z \mid Y) = H(Z \mid Y) + H(X \mid Y, Z)\). Therefore $H(X \mid Y, Z) = H(X, Z \mid Y) - H(Z \mid Y)$. As $H(Z \mid Y)\geq 0$, we can obtain the inequality.
\end{proof}

\noindent\textbf{Proof of Theorem~\ref{theorem:global_local}}. The imputation error reflects how well the model trained on the training dataset generalizes to the unobserved data.
According to the information bottleneck theory~\cite{kawaguchi2023does, enwiki:1260066467}, $\Psi({\theta}_{X_{obs}}, X_{miss})$ is proven to scale as \(\tilde{O}(\sqrt{\frac{I(X_{miss}; Z_{miss})+1}{n_{obs}}})\) where and $n_{obs}$ is the number of training observed data.
For model ${\theta^{g+l}}$, its mutual information is 
\(I^{g+l}(X_{miss}; Z_{miss})=I(X_{miss}; Z_{miss}|B^{g+l})=I(X_{miss}; Z_{miss}|B^{g}, B^{l})\) where $B^{g+l}$ is the captured global and local information.
For model ${\theta^{l}}$, its mutual information is \(I^{l}(X_{miss}; Z_{miss})=I(X_{miss}; Z_{miss}|B^{l})\). Then we have 
\begin{align*}
    & I(X_{miss}; Z_{miss}|B^{l}) = I(X_{miss}; Z_{miss},B^{l})-I(X_{miss};B^{l})  \\
    &= H(X_{miss})-H(X_{miss}|Z_{miss},B^{l})-H(X_{miss})+H(X_{miss}|B^{l})\\
    &= H(X_{miss}|B^{l})-H(X_{miss}|Z_{miss},B^{l}) \\
    &\approx H(X_{miss}|B^{l})-H(X_{miss}|Z_{miss},B^{l},B^{g}) \\
    &\geq  H(X_{miss}|B^{l}, B^{g})-H(X_{miss}|Z_{miss},B^{l},B^{g})\\
    &= I(X_{miss}; Z_{miss}|B^{l}, B^{g})
\end{align*}
The first line is obtained by the definition of conditional mutual information. The second line is obtained by the definition of mutual information. The fourth line and fifth line are based on Lemma~\ref{lemma:entropy_relation}. The fourth line is obtained as $Z_{miss}$ contains the information in $B^{g}$ and add this term will not increase the entropy. The sixth line can be obtained by the reversal of the first three lines. 
Therefore, we can have $\Psi({\theta^{g+l}}, X_{miss}) \leq \Psi({\theta^{l}}, X_{miss})$.

% \vspace{2mm}

\noindent\textbf{Proofs of Theorems~\ref{theorem:high_order} and~\ref{theorem:column_pattern}}.
The proofs of Theorem~\ref{theorem:high_order} and Theorem~\ref{theorem:column_pattern} are similar to that of Theorem~\ref{theorem:global_local}. 
Given ${\theta^{[0:r]}}$, its mutual information can be represented as $I^{[0:r]}(X_{miss}; Z_{miss})=I(X_{miss}; Z_{miss}|B^{[0:r]}) = I(X_{miss}; Z_{miss}|B^{[0:s]}, B^{[s+1:r]})$. Here, we use $B^{[0:r]}$ to represent the captured interactions from order 0 to order r. 
For model ${\theta^{[0:s]}}$, its mutual information is 
\(I^{g+l}(X_{miss}; Z_{miss})=I(X_{miss}; Z_{miss}|B^{[0:s]})\) where $B^{[0:s]}$ is the captured interactions from order 0 to order s and $r\geq s$. 
Then, we can get 
\( \Psi({\theta^{[0:r]}}, X_{miss}) \leq \Psi({\theta^{[0:s]}}, X_{miss})\) by replacing $B^{g+l}$ and $B^{l}$ in the proof of Theorem~\ref{theorem:global_local} with $B^{[0:r]}$ and $B^{[0:s]}$, respectively.
Similarly, given a model ${\theta^{cp}}$, its mutual information can be calculated as $I^{cp}(X_{miss}; Z_{miss})=I(X_{miss}; Z_{miss}|B^{cp}, B^{base})$ where $B^{cp}$ is the captured column patterns beside the base pattern $B^{base}$. 
Then, we can get the inequality
\( \Psi({\theta^{cp}}, X_{miss}) \leq \Psi({\theta}, X_{miss})\) by replacing $B^{g+l}$ and $B^{l}$ in the proof of Theorem~\ref{theorem:global_local} with $B^{cp}$ and $B^{base}$, respectively.

\noindent\textbf{Proof of Theorem~\ref{theorem:all_in_one}}.
\textit{1): Global-local information}. Any cell \(x_{ij}\) can reach the selected cell \(x_{rs}\) in two hops, as \(x_{ij}\) and \(x_{is}\) are connected by a hyperedge (they are in the same row), and \(x_{is}\) and \(x_{rs}\) are connected by a hyperedge (they are in the same column). 
\textit{2): High-order information}. The hyperedges enable the capture of features from multiple cells within the same row and column, allowing \netname~to model high-order relationships effectively.
\textit{3): Inter-column heterogeneity and intra-column homogeneity}. Each cell is modeled as a distinct node in the hypergraph which can better capture the inter-column heterogeneity. Moreover, the usage of column hyperedge can better capture the intra-column homogeneity.

\vspace{-1mm}
\section{Experimental Evaluation}
\label{sec:experiment}

\begin{table}
\centering
\caption{The profiles of datasets}
\vspace{-4mm}
\label{tab:dataset}

\begin{tabular}{|c|c|m{1.4cm}|m{0.9cm}|m{0.9cm}|m{0.9cm}|}
% {|p{1.3 cm}|p{2.4 cm}|p{1.8 cm}|p{1.8 cm}|}
\hline
\cellcolor{lightgray}{Dataset} & \cellcolor{lightgray}{Alias} & \multicolumn{1}{c|}{\cellcolor{lightgray}\textbf{$n$}}  & \multicolumn{1}{c|}{\cellcolor{lightgray}$d_\text{num}$} & \multicolumn{1}{c|}{\cellcolor{lightgray}$d_\text{cate}$} & \multicolumn{1}{c|}{\cellcolor{lightgray}$d_\text{text}$} \\ \hline
\hline

{Blogger}& BG & \centering 100   & \centering 0 & \centering 6 & \multicolumn{1}{>{\centering\arraybackslash}m{1.0cm}|}{0} \\ \hline
{ZOO}& ZO & \centering 101   & \centering 0 & \centering 17 & \multicolumn{1}{>{\centering\arraybackslash}m{1.0cm}|}{0} \\ \hline
{Parkinsons}& PK & \centering 195   & \centering 22 & \centering 1 & \multicolumn{1}{>{\centering\arraybackslash}m{1.0cm}|}{0} \\ \hline
% {Libras}& LB & \centering 360   & \centering 7  &\centering 7 & \multicolumn{1}{c|}{0}\\ \hline
% {Libras} &LB& \centering 360   & \centering 90  & \centering 1 & \multicolumn{1}{c|}{0} \\ \hline
% {Phishing} &PS& \centering 1,353   & \centering 10 & \centering 0 & \multicolumn{1}{c|}{0} \\ \hline
{Bike}& BK & \centering 8,760   & \centering 12  & \centering 1 & \multicolumn{1}{c|}{0} \\ \hline
{Chess} & CS & \centering 28,055   & \centering 3 & \centering 4 & \multicolumn{1}{c|}{0} \\ \hline
{Shuttle} & ST & \centering 43,500   & \centering 0  & \centering 10 & \multicolumn{1}{c|}{0} \\ \hline
{Power}  & PW  & \centering 2,049,280   & \centering 6  & \centering 0 & \multicolumn{1}{c|}{0}\\ \hline
{Buy}& BY & \centering 651   & \centering 1   & \centering 1 & \multicolumn{1}{c|}{2} \\ \hline
{Restaurant}  & RR &  \centering 864   & \centering 0   & \centering 2 & \multicolumn{1}{c|}{3} \\ \hline
{Walmart}  & WM & \centering 4,654   & \centering 0   & \centering 3 & \multicolumn{1}{c|}{2} \\ \hline
% {Flipkart}  & FP & \centering 20,000   & \centering 0   & \centering 0 & \multicolumn{1}{c|}{3}  \\ \hline
\end{tabular}

\end{table}
\vspace{-1mm}
\subsection{Experiment Setup}

\noindent\textbf{Datasets description}. 
The evaluation uses 10 real-world datasets from UCI~\cite{Dua:2019} and Kaggle~\cite{Kaggle:2019}, following the previous works~\cite{miao2022experimental, wang2024missing, DBLP:conf/iclr/DuM024, mei2021capturing}. The profiles of these datasets are presented in Table~\ref{tab:dataset}. These datasets exhibit diverse characteristics, including varying numbers of samples ($n$), numerical features ($d_{\text{num}}$), categorical features ($d_{\text{cate}}$) and text features ($d_{\text{text}}$).

\noindent\textbf{Baseline methods}. We include 13 baselines for a comprehensive evaluation. Specifically, we incorporate 10 baselines for numerical and categorical data, including: 1) MEAN~\cite{jamshidian2007advances}, which imputes missing values using the feature average; 2) KNNI~\cite{zhang2012nearest}, which imputes based on the weighted sum of the $k$ nearest neighbors; 3) MICE~\cite{white2011multiple}, which iteratively trains regressors for imputation; 4) VGAIN~\cite{miao2022experimental}, which integrates a Variational AE (VAE) with GAIN~\cite{yoon2018gain} for imputation; 5) GINN~\cite{spinelli2020missing}, which constructs a similarity graph and uses a graph auto-encoder for imputation; 6) GRAPE~\cite{you2020handling}, which builds a bipartite graph and applies link prediction for imputation; 7) IRGM~\cite{zhong2023data}, which combines similarity and bipartite graphs for imputation; 8) TDM~\cite{zhao2023transformed}, which matches samples from the latent space for imputation; 9) NOMI~\cite{wang2024missing}, which augments samples and uses a neural network Gaussian process for imputation; and 10) ReMasker~\cite{DBLP:conf/iclr/DuM024}, which reconstructs instances for imputation.

Additionally, we compare our method to three LLM-based approaches that handle multi-type data imputation: 1) DFMs~\cite{narayan2022can} which utilizes the in-context learning of LLM to infer imputation, 
2) Table-GPT~\cite{li2024table}, a fine-tuned LLM designed for various tabular tasks, including imputation,
and 3) Jellyfish~\cite{zhang2024jellyfish}, which finetunes LLMs for data preprocessing.

For our model, we include two variants, i.e., \netname~which is a generalized model trained on all datasets using one unified parameter set and \netname-ft which is fine-tuned on specific datasets.

\begin{table*}[thb] \centering
    \caption{Results of missing data imputation (20\% MCAR)}
    \vspace{-3mm}
    \label{tab:MCAR}
    \resizebox{\textwidth}{!}{
    \Huge\setlength{\tabcolsep}{5mm}
        \begin{tabular}{c*{4}{c}*{5}{c}*{5}{c}*{4}{c}}
        \toprule
        & \multicolumn{8}{c}{\textbf{RMSE}} 
        & \multicolumn{8}{c}{\textbf{MAE}} \\
        \cmidrule(lr){2-9}\cmidrule(lr){10-17}
                               Model & {Blogger} & {Zoo} & {Parkinsons}  
                                & {Bike} & {Chess} & {Shuttle} &  {Power} &  {\cellcolor[RGB]{255,200,200}{Improve\%}}
                               & {Blogger} & {Zoo} & {Parkinsons}  
                                & {Bike} & {Chess} & {Shuttle} &  {Power} &  {\cellcolor[RGB]{255,200,200}{Improve\%}} \\
        \midrule \midrule
MEAN & 0.4315 & 0.4260 & 0.2062 & 0.2239 & 0.3079 & 0.0914 & 0.0869 & 45.22\%              & 0.3631 & 0.3663 & 0.1473 & 0.1561 & 0.2584 & 0.0467 & 0.0559 & 57.24\%              \\
KNNI & 0.4417 & 0.2749 & 0.1891 & 0.2023 & 0.3246 & 0.0456 & OOT    & 35.75\%              & 0.3337 & 0.1473 & 0.1207 & 0.1277 & 0.2606 & 0.0192 & OOT    & 41.33\%              \\
MICE & 0.4134 & 0.2645 & 0.1263 & 0.1796 & 0.2966 & 0.0426 & 0.0650 & 28.27\%              & 0.3605 & 0.1731 & 0.0667 & 0.1097 & 0.2453 & 0.0131 & 0.0370 & 36.49\%              \\
VGAIN & 0.4316 & 0.4114 & 0.1913 & 0.2219 & 0.2797 & 0.0786 & 0.0811 & 42.48\%              & 0.3643 & 0.1891 & 0.1156 & 0.1363 & 0.2537 & 0.0352 & 0.0509 & 48.88\%              \\
TDM & 0.4384 & 0.2949 & 0.1862 & 0.2444 & 0.3027 & 0.0769 & 0.0926 & 41.48\%              & 0.3229 & 0.1490 & 0.0830 & 0.1499 & 0.2345 & 0.0350 & 0.0600 & 42.96\%              \\
GINN & 0.4657 & 0.2761 & 0.1466 & 0.1641 & 0.2911 & 0.0734 & OOM    & 32.41\%              & 0.3444 & 0.1445 & 0.0884 & 0.0921 & 0.2339 & 0.0434 & OOM    & 36.97\%              \\
GRAPE & 0.4304 & 0.3211 & 0.1064 & 0.1481 & 0.2749 & 0.0242 & OOM    & 20.89\%              & 0.3151 & 0.1605 & 0.0535 & 0.0796 & 0.2200 & 0.0073 & OOM    & 21.39\%              \\
IGRM & 0.4551 & 0.3063 & \textcolor{blue}{0.1035} & OOM    & OOM    & OOM    & OOM    & 12.11\%              & 0.3423 & 0.1621 & \textcolor{blue}{0.0497} & OOM    & OOM    & OOM    & OOM    & 8.74\%               \\
DFMs & 0.4413 & 0.4445 & 0.2412 & 0.2483 & OOT    & OOT    & OOT    & 41.53\%              & 0.3676 & 0.2934 & 0.1647 & 0.1529 & OOT    & OOT    & OOT    & 49.81\%              \\
Table-GPT & 0.4237 & 0.4315 & 0.2246 & 0.2547 & OOT    & OOT    & OOT    & 39.71\%              & 0.3572 & 0.2713 & 0.1761 & 0.1442 & OOT    & OOT    & OOT    & 48.99\%              \\
Jellyfish & 0.4133 & 0.4177 & 0.2127 & 0.1935 & OOT    & OOT    & OOT    & 36.43\%              & 0.3557 & 0.2719 & 0.1548 & 0.1478 & OOT    & OOT    & OOT    & 47.38\%              \\
NOMI & 0.4112 & \textcolor{blue}{0.2576} & 0.1322 & 0.1582 & 0.3042 & \textcolor{blue}{0.0237} & 0.0731 & 28.32\%              & \textcolor{blue}{0.3102} & \textcolor{blue}{0.1442} &  0.0710 & 0.0740 & 0.2298 & \textcolor{blue}{0.0071} & 0.0463 & 30.86\%              \\
ReMasker & \textcolor{blue}{0.4068} & 0.3309 & 0.1508 & \textcolor{blue}{0.1277} & 0.2662 & 0.1111 & OOT    & 29.61\%              & 0.3293 & 0.1807 & 0.0995 & \textcolor{blue}{0.0655} & 0.2113 & 0.0545 & OOT    & 35.36\%              \\
\midrule
\netname & 0.4171 & 0.2979 & 0.1407 & 0.1730 & \textcolor{blue}{0.2628} & 0.0398 & \textcolor{blue}{0.0485} & 25.84\%              & 0.3384 & 0.1822 & 0.0966 & 0.1121 & \textcolor{blue}{0.2050} & 0.0238 & \textcolor{blue}{0.0256} & 36.08\%              \\
\netname-ft& \textcolor{red}{0.3972} & \textcolor{red}{0.2474} & \textcolor{red}{0.0990} & \textcolor{red}{0.1172} & \textcolor{red}{0.2142} & \textcolor{red}{0.0134} & \textcolor{red}{0.0425} & \multicolumn{1}{c}{---} & \textcolor{red}{0.3082} & \textcolor{red}{0.1428} & \textcolor{red}{0.0475} & \textcolor{red}{0.0602} & \textcolor{red}{0.1438} & \textcolor{red}{0.0040} & \textcolor{red}{0.0225} & \multicolumn{1}{c}{---}\\
\bottomrule
    \end{tabular}
 }
    \vspace{0.01em}
    \begin{flushleft}
        \footnotesize $*$ \textcolor{red}{Red} text indicates the best result. \textcolor{blue}{Blue} text indicates the second best result. 'OOT' indicates out of time (with a limit of 10 hours). 'OOM' indicates out of memory.
    \end{flushleft}
    \vspace{-3mm}
\end{table*}

\noindent\textbf{Metrics}.
For numerical and categorical data, we adopt the Root-Mean-Square-Error (RMSE) and Mean-Absolute-Error (MAE) metrics, following previous works ~\cite{miao2022experimental, wang2022global, yoon2018gain}, to assess accuracy.

For text data, we employ two metrics to assess imputation accuracy: Recall-Oriented Understudy for Gisting Evaluation (ROUGE-1)~\cite{DBLP:conf/conll/NallapatiZSGX16} and Cosine Similarity (Cos-Sim)~\cite{DBLP:conf/emnlp/ReimersG19}, covering both lexical-wise and semantic-wise approaches. ROUGE-1 offers a statistical evaluation by comparing the overlap of unigrams between the imputed and original text, providing a straightforward measure of how closely the imputed text aligns with the original in terms of vocabulary usage. However, some imputations may convey the same meaning but differ in format or wording, such as "METRO" and "SUBWAY." To account for this, we introduce Cosine Similarity, which evaluates the semantic similarity by comparing the vector representations of the imputed and original texts. This metric captures deeper contextual relationships even when their lexical-level representation differs. Given the generated text $T_G$ and the ground-truth reference text $T_R$, the metrics are computed as:
\vspace{-2mm}
\begin{align*}
\text{ROUGE-1}_{\text{recall}} = \frac{| T_R \cap T_G |}{| T_G |}; \text{ROUGE-1}_{\text{precision}} = \frac{| T_R \cap T_G |}{| T_R |} \\[8pt]
\text{ROUGE-1}_{F_1} = 2 \cdot \frac{\text{ROUGE-1}_{\text{recall}} \cdot \text{ROUGE-1}_{\text{precision}}}{\text{ROUGE-1}_{\text{recall}} + \text{ROUGE-1}_{\text{precision}}}
\end{align*}
\vspace{-2mm}
\begin{equation}
    \text{Cos-Sim} = \frac{\sum_{i=1}^{d_{Bert}} BERT(T_G)_i \cdot BERT(T_R)_i}{\sqrt{\sum_{i=1}^{d_{Bert}} BERT(T_G)_i^2} \sqrt{\sum_{i=1}^{d_{Bert}} BERT(T_R)_i^2}}
\end{equation}
Where $BERT(\cdot)$ represents the embedding vectors given input. $d_{Bert}$ is the dimension of the embeddings. 

For the RMSE and MAE, a lower value indicates a better imputation. For the $\text{ROUGE-1}_{F_1} $ and Cos-Sim, a higher value indicates a better imputation.
It is important to note that for the first seven datasets (BG, ZO, PK, BK, CS, ST and PW) which contain only numerical and categorical features, we assess imputation performance using RMSE and MAE. On the other hand, since text data can only be evaluated with $\text{ROUGE-1}_{F_1}$ and Cos-Sim, we use these metrics to uniformly evaluate the last three datasets (BY, RR, WM), which include a mix of numerical, categorical and text features.

\noindent\textbf{Missing mechanisms}. Original data are fully observed, we thus follow previous works~\cite{miao2022experimental, zhao2023transformed} to generate the mask matrix. Three mechanisms are utilized, i.e., missing completely at random ({MCAR}), missing at random (MAR) and missing not at random (MNAR).

\noindent\textbf{Implementation details}. The missing rate is set as 20\% and the missing mechanism is MCAR by default. For a fair comparison, the pre-trained LLM is llama2-7b~\cite{touvron2023llama} (more LLMs are evaluated in Exp-8) for all the baselines. We use Jellyfish-7B to keep the same size of LLM in~\cite{zhang2024jellyfish}. As the model in ~\cite{li2024table} is not released, we fine-tune llama2-7b using the provided datasets. The number of layers is 3 in BiHMP. For the numerical and categorical data (\textit{resp}. text data), we set the chunk size as 512 (\textit{resp}. 32) with a batch size of 64 (\textit{resp}. 2). We trained for 4000 (\textit{resp}. 100) epochs and fine-tuned with 1000 (\textit{resp}. 20) epochs. 
% The hyper-parameters in the baselines are all aligned with their original paper. 
We pre-train and infer the model in the server with Xeon(R) Silver 4314 CPU, 504GB memory and 2*A5000 (GPU).

\begin{table}[t] \centering
    \caption{Results of imputation over text data}
    \vspace{-3mm}
    \label{tab:string_result}
    \resizebox{0.45\textwidth}{!}{
    \Huge\setlength{\tabcolsep}{4mm}
        \begin{tabular}{c*{3}{c}*{3}{c}}
        \toprule
        & \multicolumn{3}{c}{\textbf{$\text{ROUGE-1}_{F_1}$}} 
        & \multicolumn{3}{c}{\textbf{Cos-Sim}} \\
        \cmidrule(lr){2-4}\cmidrule(lr){5-7}
                               Model & {Buy} & {Restaurant} & {Walmart} & {Buy} & {Restaurant} & {Walmart} \\
        \midrule \midrule
DFMs     & 0.1535	& 0.0822	& 0.1420 & 0.8251 &	0.7609 &	0.7943 \\
Table-GPT  &  0.1784 &	0.1398 &	0.1344 & 0.8345 &	0.8137 &	0.8254     \\
Jellfish   & 0.2153 &	0.1675 &	0.2067 & 0.8418 &	0.8145 &	0.778 \\

% \midrule
\netname  & 0.3327 &	0.4017 &	0.5594 & 0.8610 &	0.8774 &	0.9025 \\
\netname-ft & 0.4273 &	0.4326 &	0.5931 & 0.8892 &	0.8923 &	0.9177 \\
        \bottomrule
    \end{tabular}
    
 }
 \vspace{-6mm}
    % \vspace{0.01em}
    % \begin{flushleft}
    %     \footnotesize $*$ \textcolor{red}{Red} text indicates the best and \textcolor{blue}{blue} text indicates the second best result, respectively. '-' indicates the model needs more than 4 hours to obtain the result. 
    % \end{flushleft}
\end{table}
\vspace{-2mm}
\subsection{Accuracy Comparison}

\begin{figure*}
\subfigbottomskip=-8pt %设置第二行子图与第一行子图的距离，即下面的头与上面的脚的距离
\subfigcapskip=-7pt %设置子图与子标题之间的距离
    
    \subfigure[Results under 20\% MAR]{ 
        
        \includegraphics[width=0.48\textwidth]{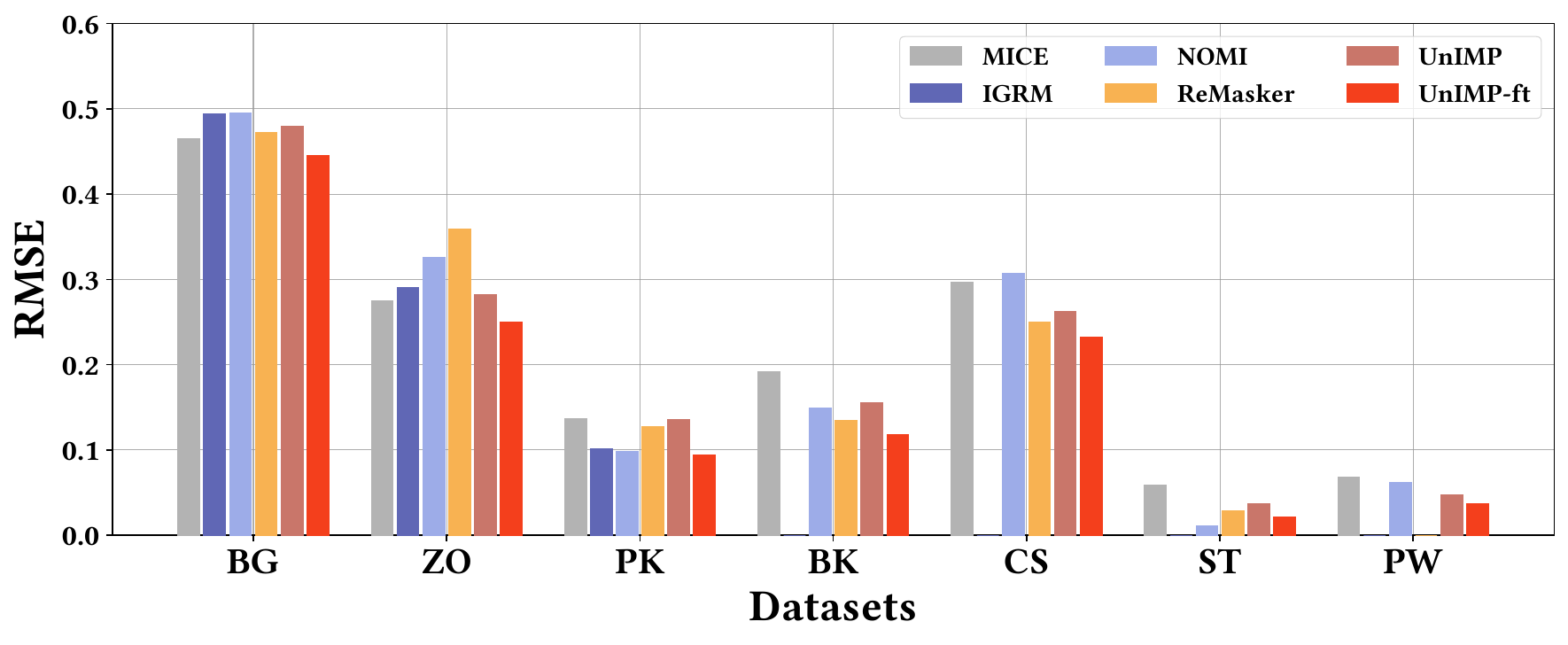}
    }
    \subfigure[Results under 20\% MNAR]{ 
        
        \includegraphics[width=0.48\textwidth]{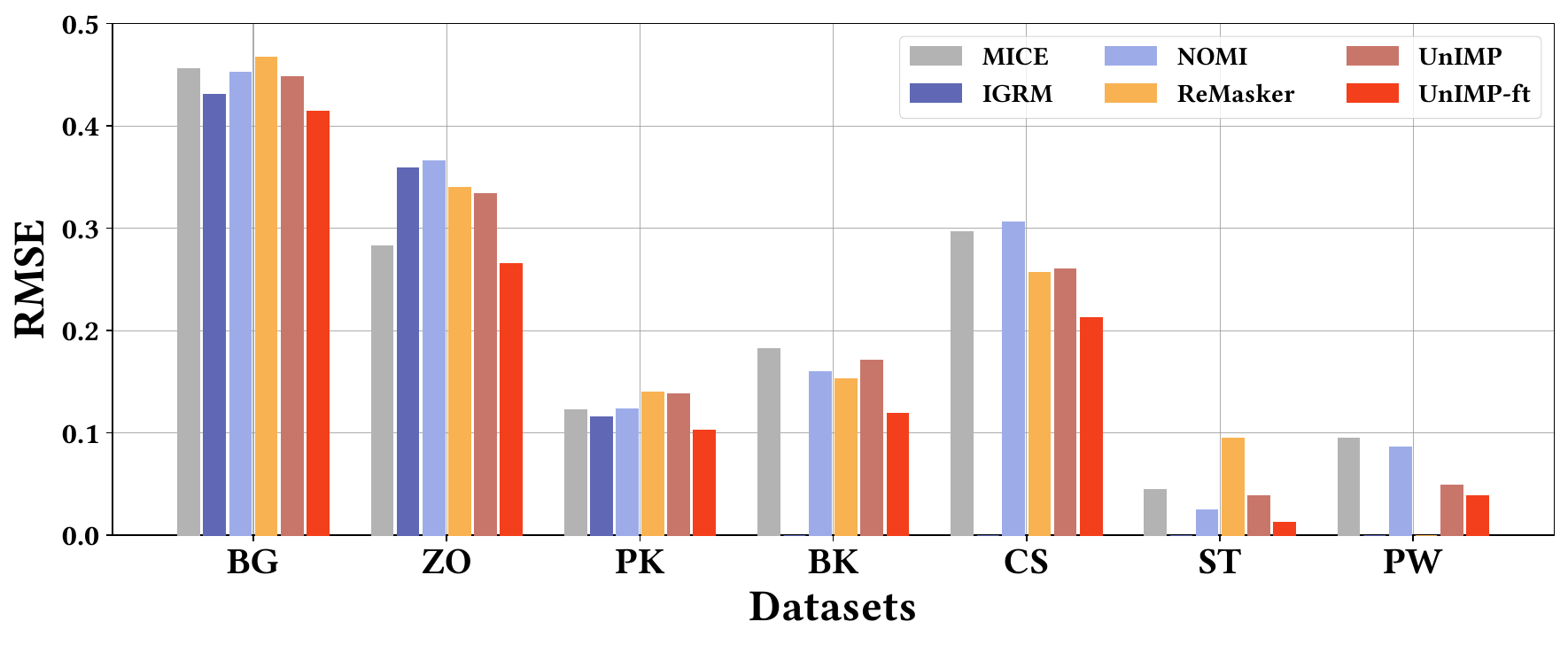}
    }
    \vspace{-2mm}
    \caption{Results of different missing mechanisms}
    \vspace{-4mm}
    \label{fig:mnar_mar}
\end{figure*}

\noindent\textbf{Exp-1: Accuracy over numerical and categorical data}. In this experiment, we evaluate the imputation accuracy for numerical and categorical data. The results are summarized in Table~\ref{tab:MCAR}, with a missing rate of 20\% under the MCAR mechanism. Some results are omitted due to imputation processes exceeding 10 hours (denoted as OOT for "out of time") or running out of memory (OOM).
As shown in the table, LLM-based methods such as DFMs and Table-GPT demonstrate poor performance on both numerical and categorical data, while our method achieves good performance across both data types. Graph-based methods, including GINN, GRAPE, IGRM and \netname~show an outstanding performance. Furthermore, our fine-tuned version, \netname-ft, further enhances performance compared to \netname.
Specifically, in terms of RMSE, \netname-ft shows average performance gains of 12.11\%, 28.27\%, 28.32\% and 29.61\% compared to IGRM, MICE, NOMI and ReMasker, respectively. Regarding MAE, \netname-ft demonstrates average performance gains of 8.74\%, 36.49\%, 30.86\% and 35.36\% over IGRM, MICE, NOMI, and ReMasker, respectively. These results highlight the excellence of \netname~and \netname-ft in imputing numerical and categorical data.

\noindent\textbf{Exp-2: Accuracy over text data}. In this experiment, we evaluate the performance when imputing text data. We compare \netname~with LLM-based methods as other imputation methods are hard to handle text data. The results under 20\% missing rate and MCAR are presented in Table~\ref{tab:string_result}. The results show that our proposed methods, including both \netname~and \netname-ft, consistently outperform previous LLM-based methods. Specifically, regarding $\text{ROUGE-1}_{F_1}$, \netname~has an average performance gain of 30.53\%, 28.04\% and 23.47\% compared to DFMs, Table-GPT and Jellyfish, respectively. Regarding Cos-Sim,  \netname~has an average performance gain of 8.68\%, 5.58\% and 6.88\% compared to DFMs, Table-GPT and Jellyfish, respectively. 
\netname-ft can further boost the performance compared to \netname.
These results collectively highlight the high performance of \netname~and \netname-ft.

\noindent\textbf{Exp-3: Performance under MAR and MNAR}. The missing patterns in the MNAR and MAR are based on the data matrix and mask matrix as shown in ~\cite{miao2022experimental,wang2024missing}. For the text data, it is hard to simulate such a relationship, therefore, we focus on MNAR and MAR for numerical and categorical data in this paper. Four baselines are used for their high performance under MCAR, including the MICE, IGRM, NOMI and ReMasker. 
We follow the simulation process in ~\cite{wang2024missing, yoon2018gain,miao2022experimental} to model the missing mechanism. We use a logistic model built based on randomly selected features to capture the dependency. Missing probability is generated by the logistic model. Consistent with Exp-1 and Exp-2, the missing rate is 20\%.

The overall results under MAR and MNAR are illustrated in Figure~\ref{fig:mnar_mar}. The results show that our methods have excellent performance under both MAR and MNAR. Specifically, under MAR, \netname-ft~reduces the RMSE by 27.58\%, 10.22\%, 4.39\% and 17.86\% compared to MICE, IGRM, NOMI and ReMasker respectively. Under MNAR, \netname-ft reduces the RMSE by 30.23\%, 10.72\%, 27.32\% and 29.57\%, respectively. These results highlight the superiority of \netname~and \netname-ft under MAR and MNAR.

\begin{figure}[t!]
    \centering
    
\setlength{\abovecaptionskip}{0.cm}
    \includegraphics[width=0.48\textwidth]{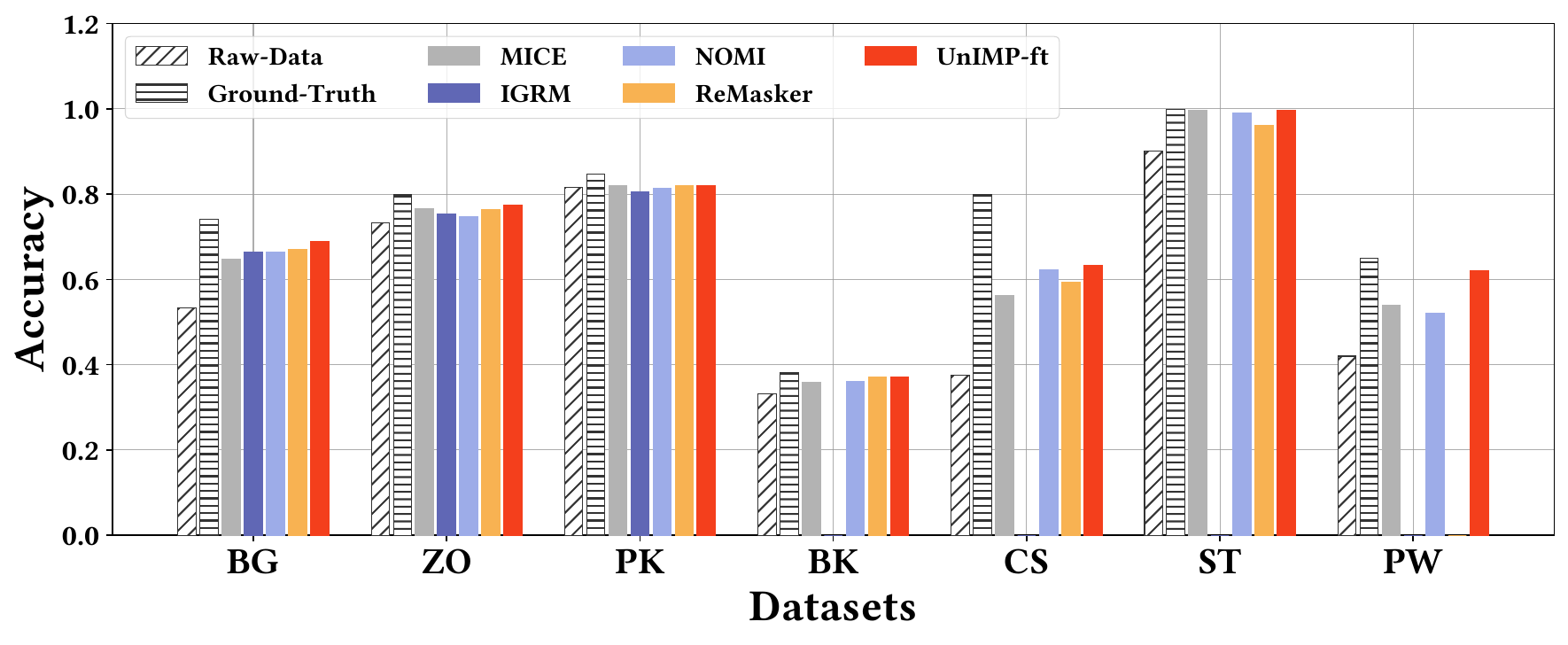} 
    \vspace{-5mm}
    \caption{Results of downstream classification}
    \label{fig:downstream}
\vspace{-5mm}
\end{figure}

\noindent\textbf{Exp-4: Downstream classification accuracy}. In this experiment, we evaluate the downstream classification performance on datasets imputed by MICE, IGRM, NOMI, ReMasker and \netname-ft. For a comprehensive comparison, we also include results from the ground-truth dataset (with no missing data) and the raw dataset (where missing values are replaced with 0, following~\cite{wang2024missing}). This experiment aims to demonstrate that improved dataset quality leads to better downstream classification accuracy.
We employ Random Forest as our classifier for all imputed datasets. The datasets are split into 80\% training and 20\% testing sets. Figure~\ref{fig:downstream} presents the classification results. The results indicate a higher imputation accuracy generally leads to a higher downstream classification, and datasets imputed by our proposed method achieve superior classification results compared to datasets imputed by other baselines.

\vspace{-2mm}
\subsection{Efficiency Evaluation}

\noindent\textbf{Exp-5: Efficiency}.
In this experiment, we evaluate the imputation efficiency. The results are summarized in Figure~\ref{fig:imputation_efficiency}. As our method is related to the graph-based imputation and LLM-based imputation, we mainly involve comparing \netname-ft with graph-based methods (i.e., GRAPE and IGRM) and LLM-based methods (i.e., DFMs, Table-GPT and Jellyfish) and baselines with high imputation accuracy (i.e., NOMI and ReMasker). 
As shown in the figure, our method demonstrates competitive efficiency compared to other baselines. Compared to other graph-based methods, \netname-ft effectively handles large datasets and achieves higher accuracy, with a 3.07$\times$ and 5.38$\times$ speedup over GRAPE and IGRM, respectively. Additionally, compared to other LLM-based methods, \netname-ft benefits from using BiHMP as an adapter and requires fewer prompt tokens for inference, leading to improved imputation efficiency. Collectively, these results highlight the superior efficiency of \netname.

\begin{figure}[t!]
    \centering
    
\setlength{\abovecaptionskip}{0.cm}
    \includegraphics[width=0.48\textwidth]{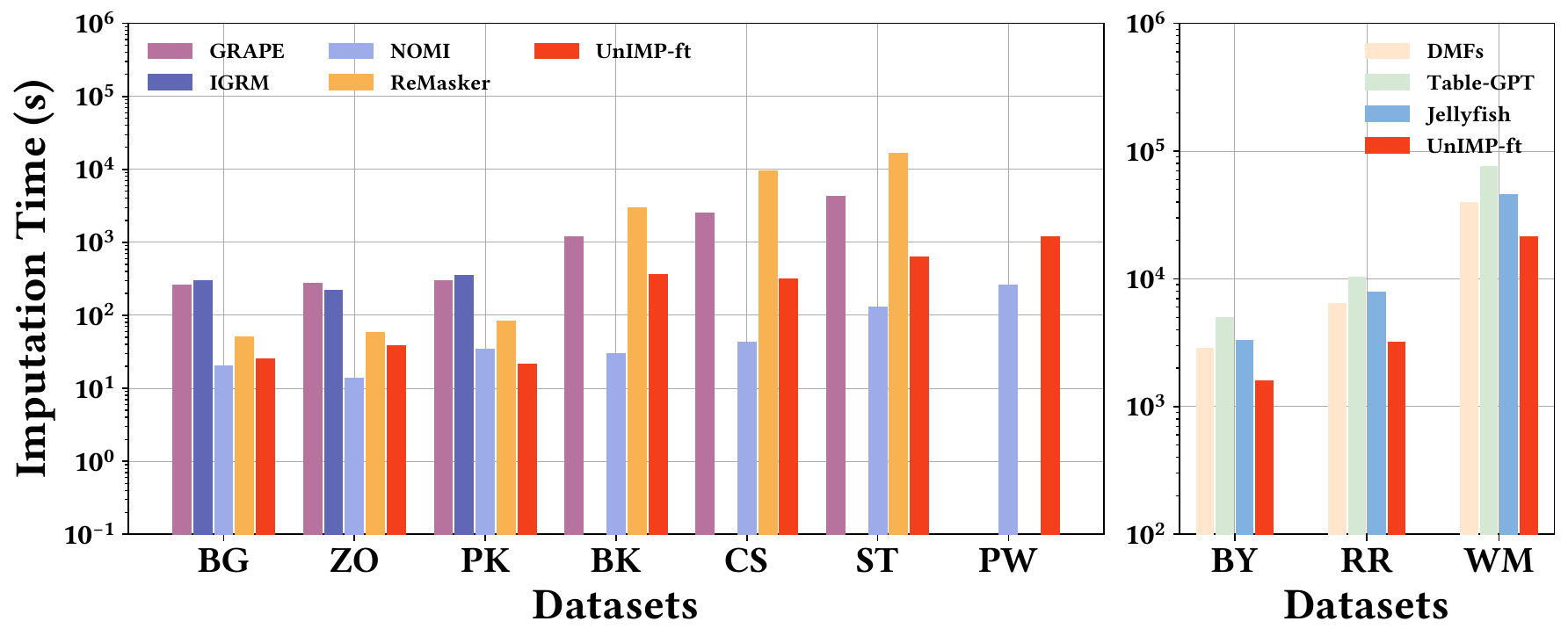} 
    % \vspace{-2mm}
    \caption{Imputation efficiency evaluation (in seconds). Besides, the numerical/categorical data and text data need 20824 seconds and 73221 seconds for pre-training \netname, respectively. 45381 seconds are needed for pre-training Table-GPT.}
    \label{fig:imputation_efficiency}
\vspace{-4mm}
\end{figure}

\begin{figure*}
\subfigbottomskip=-8pt %设置第二行子图与第一行子图的距离，即下面的头与上面的脚的距离
\subfigcapskip=-7pt %设置子图与子标题之间的距离
    \subfigure[Varying LLMs]{ 
        \includegraphics[width=0.132\textwidth]{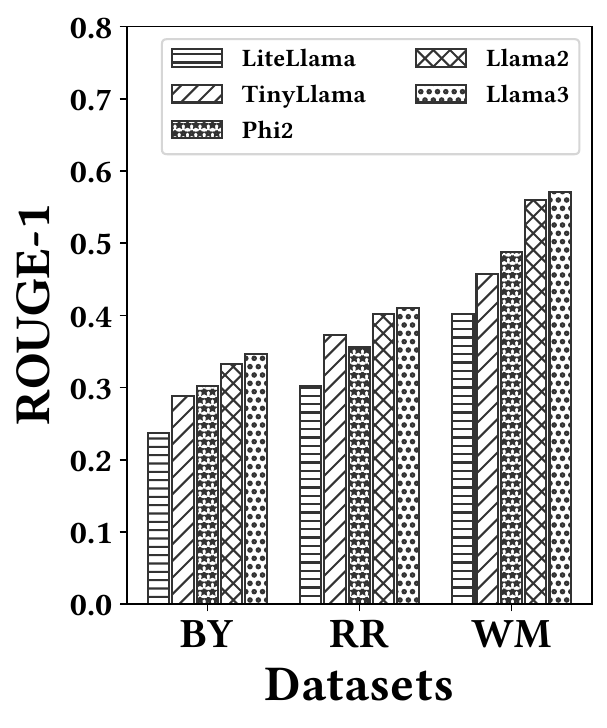}
    }
    \subfigure[Varying chunck sizes]{
        \includegraphics[width=0.132\textwidth]{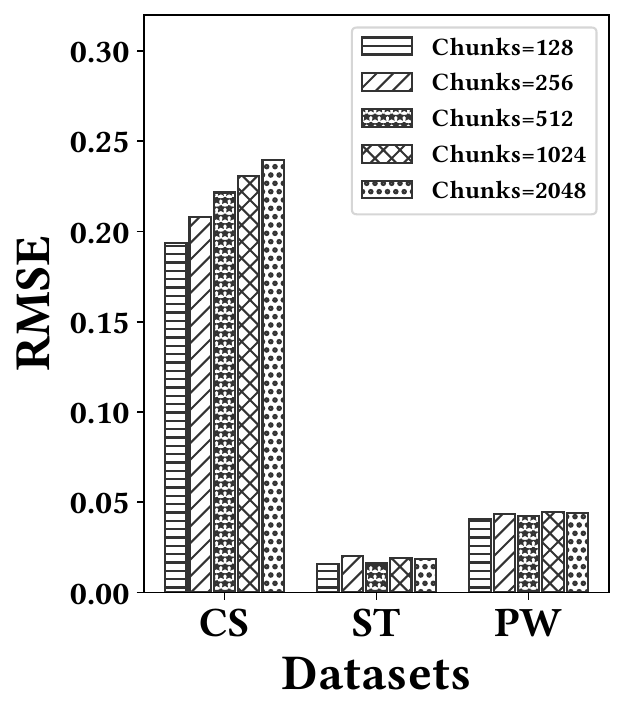} 
        \includegraphics[width=0.132\textwidth]{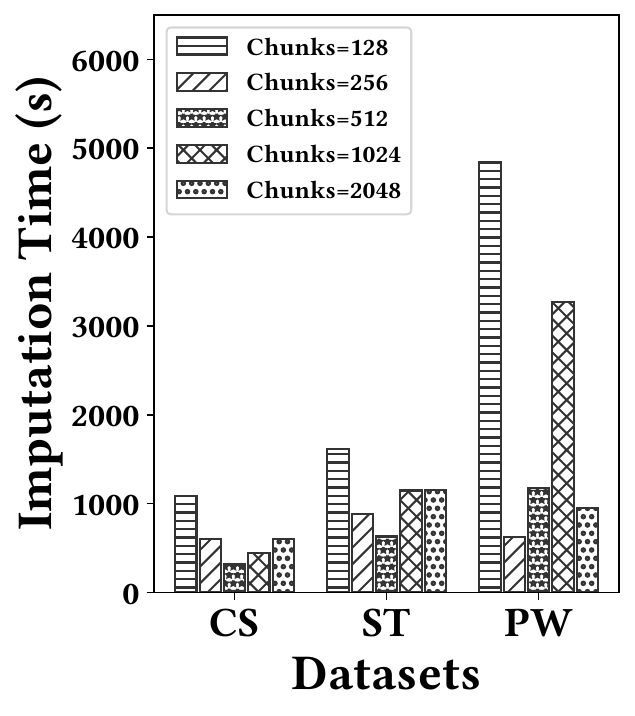} 
    }
    \subfigure[Varying batch sizes]{ 
        \includegraphics[width=0.132\textwidth]{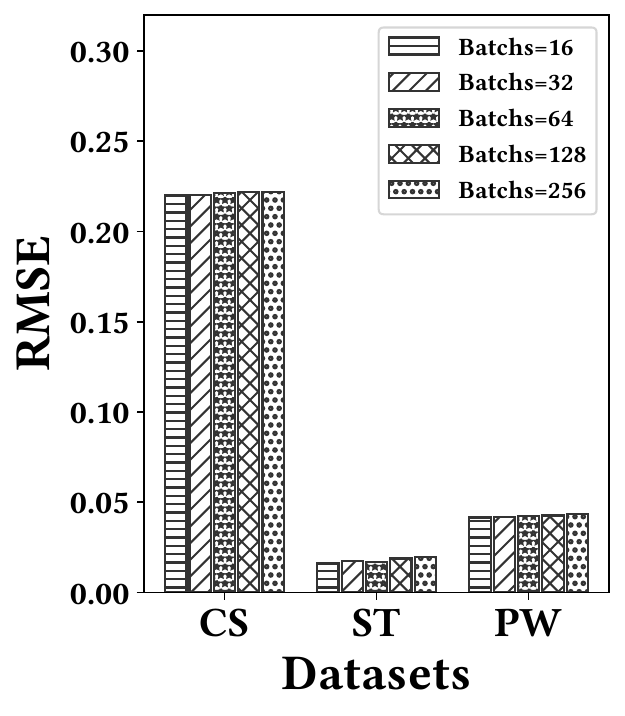}
        \includegraphics[width=0.132\textwidth]{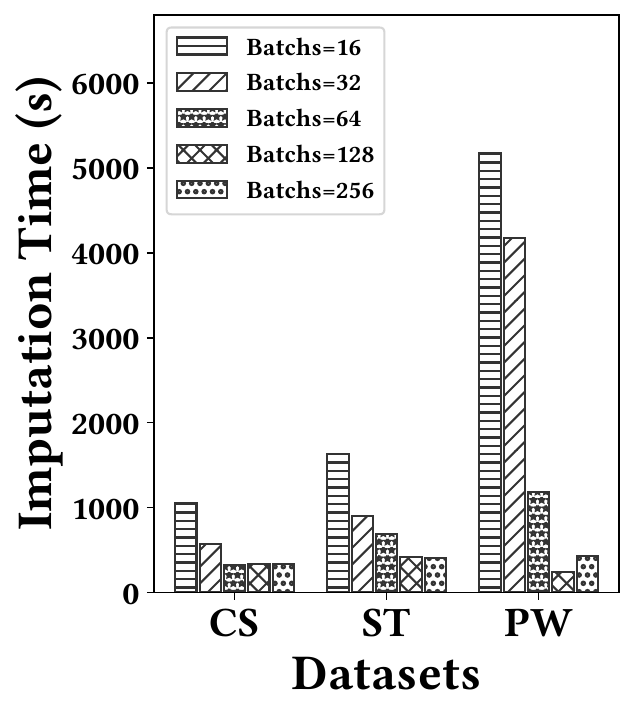}
    }
    \subfigure[Varying $\delta$]{ 
        \includegraphics[width=0.132\textwidth]{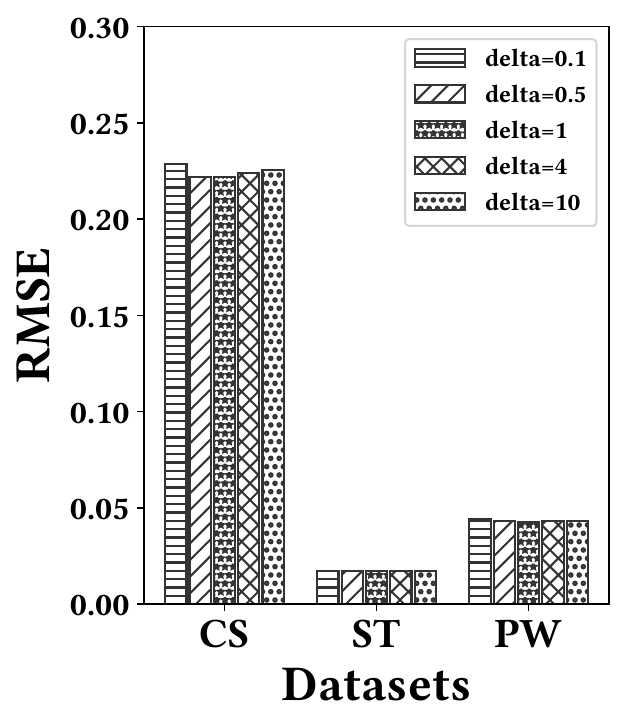}
    }
    \subfigure[Varying $\kappa$]{ 
        \includegraphics[width=0.132\textwidth]{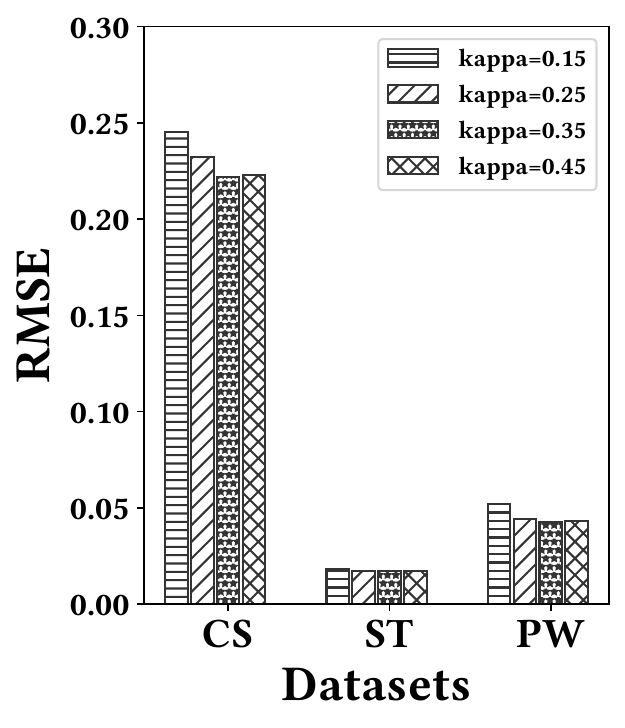}
    }
    \subfigure[F1-score with varying missing rates]{ 
        
        \includegraphics[width=0.32\textwidth]{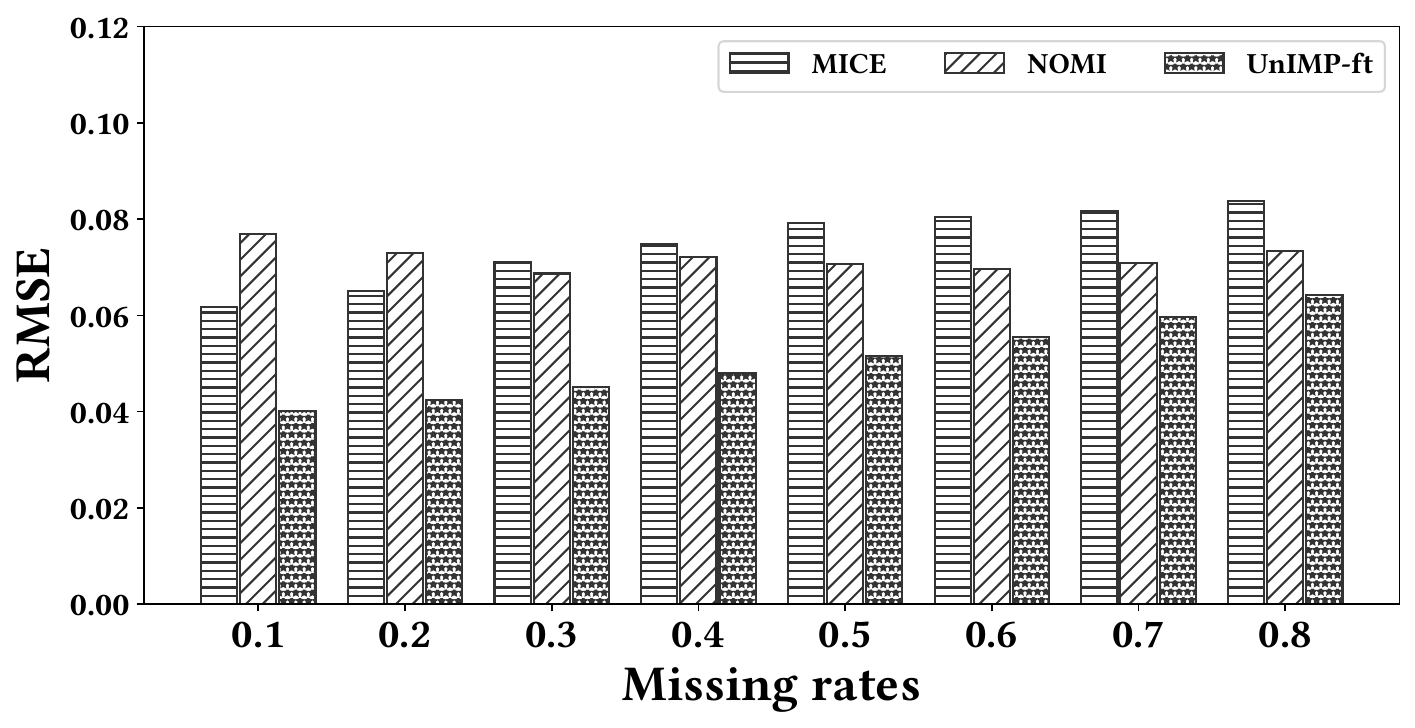}
    }
    \subfigure[RMSE with varying pre-training epochs]{ 
        
        \includegraphics[width=0.32\textwidth]{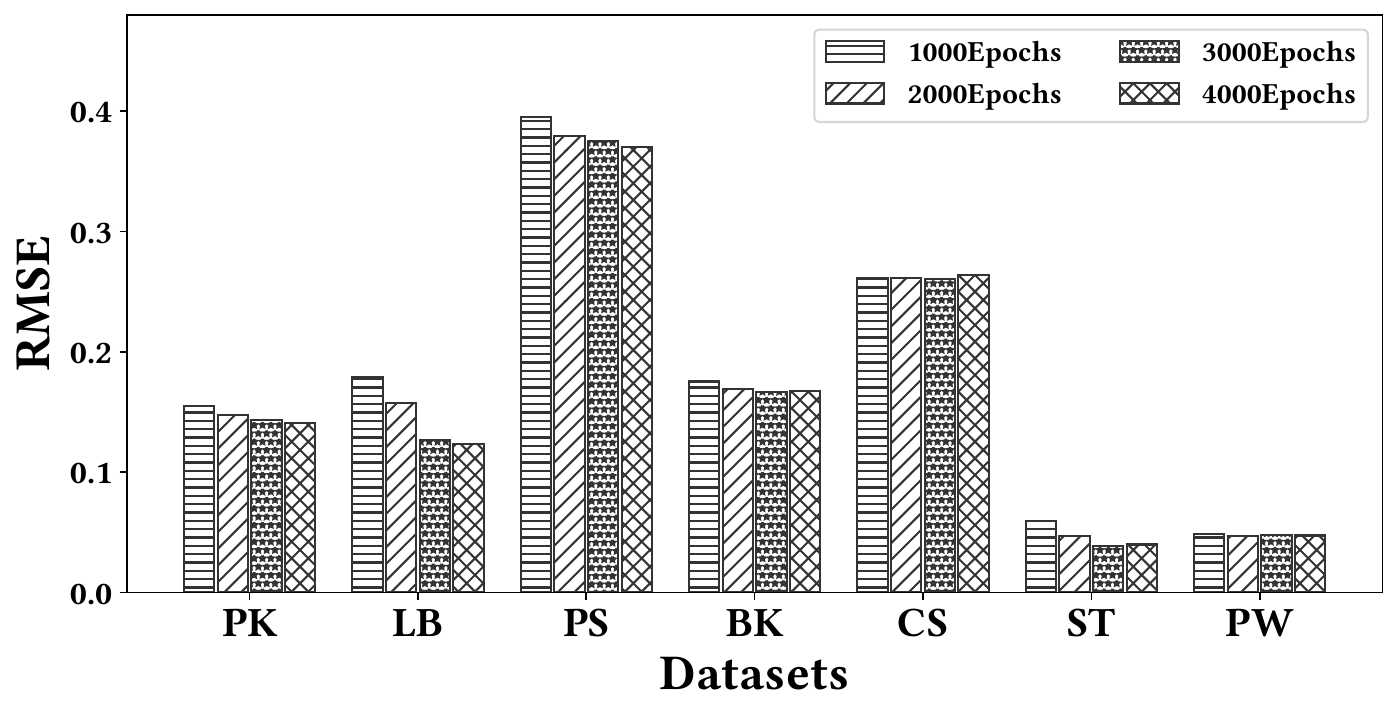}
    }
    \subfigure[RMSE with varying fine-tuning epochs]{
        % \centering
        \includegraphics[width=0.32\textwidth]{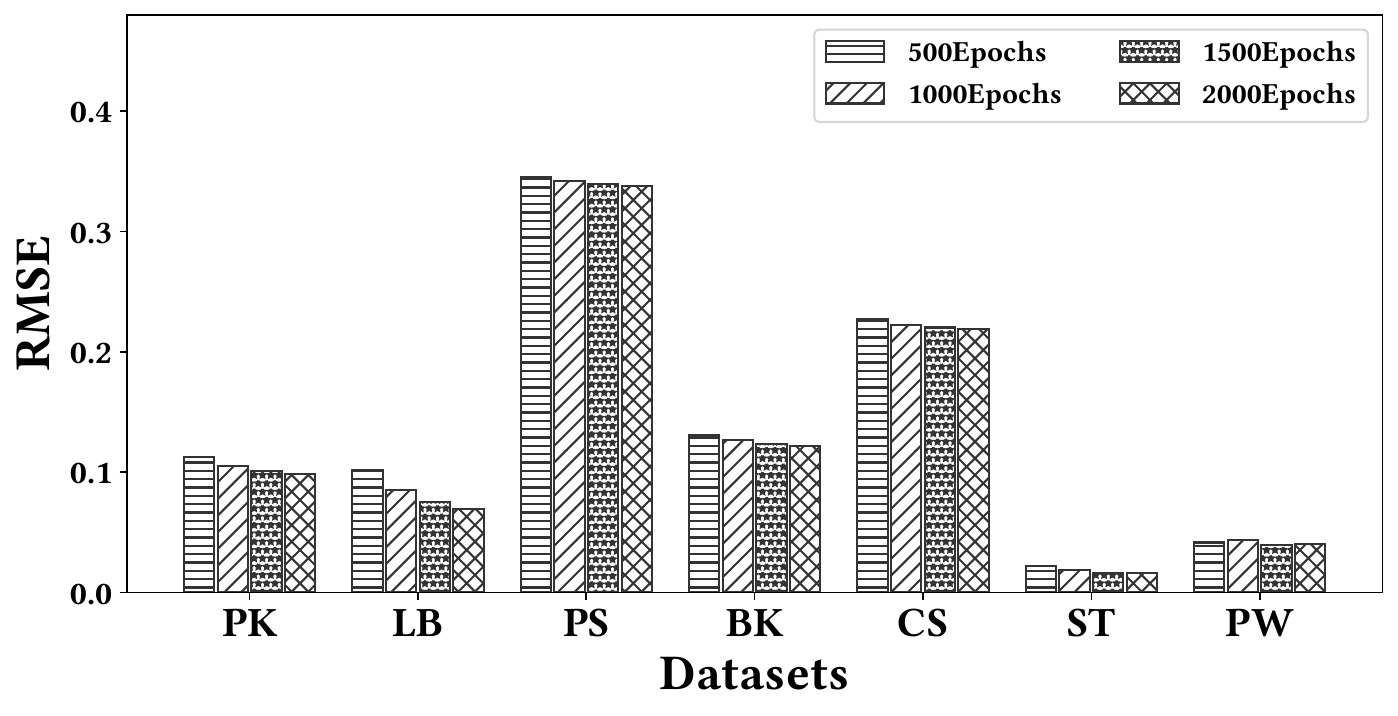} 
    }
    
    \vspace{-2mm}
    \caption{Results of hyper-parameter analysis}
    \vspace{-5mm}
    \label{fig:hyperparameter_analysis}
\end{figure*}

\vspace{-2mm}
\subsection{Ablation Study}

\begin{table}[t] \centering
    
    \vspace{-1mm}
     \caption{Results of ablation study}
     \vspace{-2mm}
    \label{tab:ablation_result}
    \resizebox{0.45\textwidth}{!}{
    \Huge\setlength{\tabcolsep}{4mm}
        \begin{tabular}{c*{3}{c}*{3}{c}}
        \toprule
        & \multicolumn{3}{c}{\textbf{$\text{ROUGE-1}_{F_1}$}} 
        & \multicolumn{3}{c}{\textbf{Cos-Sim}} \\
        \cmidrule(lr){2-4}\cmidrule(lr){5-7}
                               Model & {Buy} & {Restaurant} & {Walmart} & {Buy} & {Restaurant} & {Walmart} \\
        \midrule \midrule
\netname  & 0.3327 &	0.4017 &	0.5594 & 0.8610 &	0.8774 &	0.9025 \\
 w/o Hypergraph    &  0.2580 &	0.3757 &	0.4812 &	0.8508 &	0.8611	& 0.8681\\
 w/o BiHMP    &  0.3014 &	0.3877 &	0.5282 &	0.8571 &	0.8702	& 0.8822\\
 w/o Xfusion  &  0.2867	& 0.3648 &	0.5052 &	0.8517 &	0.8581 &	0.8729   \\
 w/o P.g. mask &  0.3263 &	0.3908 &	0.5337 &	0.8602 &	0.8654 &	0.8932   \\
% \midrule

        \bottomrule
    \end{tabular}
 }
 % \caption{Results of ablation study}
\vspace{-6mm}
\end{table}
\noindent\textbf{Exp-6: Ablation study}. In this section, we conduct an ablation study to evaluate the effectiveness of the key components proposed in \netname. Specifically, we evaluate the hypergraph-aggregated information, the BiHMP module, the Xfusion model and the progressive masking technique.
To evaluate the impact of hypergraph information, we replace it with a zero tensor while keeping all other components the same and train the model. To assess the effectiveness of BiHMP, we substitute it with the Set Transformer from~\cite{chen2024hytrel}, leaving the rest unchanged. For Xfusion, we replace it with a weighted sum fusion mechanism, which combines token and BiHMP-aggregated embeddings by generating weights from the BiHMP-aggregated embeddings and applying them to the token embeddings. Lastly, to analyze the effectiveness of progressive masking, we train the model using all observed data without masking.
The results are summarized in Table~\ref{tab:ablation_result}. As shown, incorporating hypergraph information provides an average improvement of 5.96\% in $\text{ROUGE-1}_{F_1}$ and 2.03\% in $\text{Cos-Sim}$. The BiHMP module contributes an average gain of 2.55\% in $\text{ROUGE-1}_{F_1}$ and 1.04\% in $\text{Cos-Sim}$. Xfusion yields an average improvement of 4.57\% in $\text{ROUGE-1}_{F_1}$ and 1.94\% in $\text{Cos-Sim}$. Progressive masking leads to an average gain of 1.43\% in $\text{ROUGE-1}_{F_1}$ and 0.74\% in $\text{Cos-Sim}$.

\noindent\textbf{Exp-7: Generalization}. In this experiment, we assess the imputation performance on datasets that are not used during pre-training, i.e., in an inductive setting. We use six new datasets from the UCI repository~\cite{Dua:2019}, a widely-used open-source collection frequently referenced in previous studies~\cite{miao2022experimental, zhao2023transformed, wang2024missing, you2020handling}. The datasets are Ionosphere, Breast, Spam, News, Connect and Metro. The results, presented in Table~\ref{tab:inductive_result}, demonstrate that despite \netname~not being trained on these datasets, it achieves competitive performance compared to models trained specifically on them. In particular, for large datasets like Connect and Metro, \netname~has an RMSE difference of within ±10\% compared to models trained directly on those datasets.

\begin{table}[t] \centering
    \caption{RMSE results of inductive generalization}
    \vspace{-3mm}
    \label{tab:inductive_result}
    \resizebox{0.45\textwidth}{!}{
    \Huge\setlength{\tabcolsep}{4mm}
        \begin{tabular}{c*{3}{c}*{3}{c}}
        \toprule
        % & \multicolumn{3}{c}{\textbf{$\text{ROUGE-1}_{F_1}$}} 
        % & \multicolumn{3}{c}{\textbf{Cos-Sim}} \\
        % \cmidrule(lr){2-4}\cmidrule(lr){5-7}
                               Model & {Ionosphere} & {Breast} & {Spam} & {News} & {Connect} & {Metro} \\
        \midrule \midrule
MICE      & 0.2487	& 0.2183	& 0.0712 & 0.0237 &	0.2539 & 0.1225\\
IGRM    &  0.2502 &	0.2107 &	0.0571 &	OOM &	OOM &	OOM    \\
NOMI  & 0.2320 &	0.2115 &	0.0611 &	0.0252 &	0.2473 &	0.1485 \\
ReMasker  & 0.2360 &	0.2058 &  0.0595 & 0.0282 &	OOT &	OOT  \\
\netname & 0.2444 &	0.2426 & 0.0626 &	0.0306 &	0.2569 &	0.1308\\
        \bottomrule
    \end{tabular}
 }
\vspace{-6mm}
\end{table}

\vspace{-2mm}
\subsection{Hyper-parameters Analysis}

\noindent\textbf{Exp-8: Varying LLMs}. We evaluate the imputation RMSE using several representative LLMs, including LiteLlama~\cite{LiteLlama}, TinyLlama, Phi2~\cite{phi2}, Llama2~\cite{touvron2023llama} and Llama3~\cite{dubey2024llama}, with the results shown in Figure~\ref{fig:hyperparameter_analysis}(a). As depicted, more advanced LLM generally leads to better imputation performance. We selected Llama2, as it was one of the most outstanding open-source LLMs with extensive resources and support available when conducting experiments.

\noindent\textbf{Exp-9: Chunk sizes}. We evaluate imputation RMSE and imputation time with varying chunk sizes, as shown in Figure~\ref{fig:hyperparameter_analysis}(b). The chunk sizes tested are 128, 256, 512, 1024 and 2048. The evaluation is conducted on the Chess, Shuttle and Power datasets which have the largest number of samples. As shown, RMSE fluctuates with different chunk sizes, while imputation time first decreases and then increases. This may be because very small chunks underutilize the GPU resources, while very large chunks require excessive I/O resources. A chunk size of 512 strikes a good balance between imputation RMSE and time.

\noindent\textbf{Exp-10: Batch sizes}. We evaluate the imputation RMSE and imputation time with varying batch sizes, as shown in Figure~\ref{fig:hyperparameter_analysis}(c). The batch sizes tested are 16, 32, 64, 128 and 256. As in Exp-9, the Chess, Shuttle, and Power datasets are used. The results show that RMSE fluctuates with different batch sizes, while imputation time decreases as batch size increases. A batch size of 64 generally provides a good imputation RMSE and time.

% \noindent\textbf{Exp-12: Encoding method}.

% \noindent\textbf{Exp-12: Various LLMs}.
\noindent\textbf{Exp-11: Varying $\delta$}. We evaluate the imputation RMSE with different values of $\delta$, which is used in the loss function as defined in Eqn~\ref{equ:loss_huber}. The values of $\delta$ tested are 0.1, 0.5, 1, 4 and 10, and the results are shown in Figure~\ref{fig:hyperparameter_analysis}(d). As illustrated, RMSE fluctuates with varying $\delta$, with $\delta=1$ generally providing an excellent RMSE.

\noindent\textbf{Exp-12: Varying ratios of progressive masked data, $\kappa$}. We evaluate the imputation RMSE with different values of $\kappa$, which is used to control the newly masked ratio. The values tested are 0.15, 0.25, 0.35 and 0.45, and the results are shown in Figure~\ref{fig:hyperparameter_analysis}(e). As illustrated, RMSE first decreases and then increases with $\kappa=0.35$ generally providing an excellent RMSE.

\noindent\textbf{Exp-13: Missing rates}.
We experiment with missing rates ranging from 0.1 to 0.8 on the Power dataset which contains the largest number of samples. We compare \netname~with MICE and NOMI, as they demonstrated superior performance on Power as in Exp-1. The results are presented in Figure~\ref{fig:hyperparameter_analysis}(f). As illustrated, the RMSE increases with higher missing rates, and \netname-ft consistently outperforms the baselines across varying missing rates.

% \noindent\textbf{Exp-9: Training size}.

\noindent\textbf{Exp-14: Pre-training epochs}. We assess the performance across pre-training epochs, varying from 1000 to 4000 in increments of 1000. The results are presented in Figure~\ref{fig:hyperparameter_analysis}(g). As shown, RMSE generally decreases with more pre-training epochs, and 4000 epochs is generally sufficient to achieve superior performance.

\noindent\textbf{Exp-15: Fine-tuning epochs}. We evaluate the performance under varying fine-tuning epochs, ranging from 500 to 2000 at an interval of 500. The results are illustrated in Figure~\ref{fig:hyperparameter_analysis}(h). As shown, RMSE generally decreases with more fine-tuning epochs, and 1000 epochs is generally adequate to achieve excellent performance.

\vspace{-3mm}
\section{Related Work}
\vspace{-1mm}
\label{sec:relatedwork}

\noindent \textbf{Statistics-based and similarity-based imputation.}
Statistics-based algorithms impute missing data using statistical values (e.g., mean or mode) for each feature~\cite{moons2006using,farhangfar2007novel}. Similarity-based algorithms impute missing data using closely related data. KNNI~\cite{altman1992introduction} selects K nearest neighbors of the sample with missing features for imputation. An iterative method proposed in~\cite{zhang2012nearest} locates a more reliable local context. NOMI~\cite{wang2024missing} uses similarity to iteratively augment the input and employs an uncertainty-driven network for imputation.

\noindent \textbf{Graph-based models for imputation.} Graph structures have been employed to capture complex relationships for data imputation. One popular approach involves using similarity graphs, as demonstrated in GINN~\cite{spinelli2020missing} and GEDI~\cite{chen2023gedi}, where edges are constructed between samples to facilitate imputation. Another popular approach leverages bipartite graphs, as seen in GRAPE~\cite{you2020handling} and IGRM~\cite{zhong2023data}, where each row and column is represented as a node, and the cell values are modeled as edges in the graph.

\noindent \textbf{Deep learning techniques for imputation.}
Various AI techniques are used to better capture the underlying relationships in raw datasets. Traditional methods include models like XGBoost~\cite{chen2016xgboost}, random forests in MissForest~\cite{stekhoven2012missforest}, multi-layer perceptrons~\cite{garcia2010pattern}, and linear regression in MICE~\cite{royston2011multiple}. Generative models are also widely employed, with autoencoders (AEs) being particularly popular. Examples include deep denoising autoencoders (DAE) in MIDAE~\cite{gondara2018mida} and ReMasker~\cite{DBLP:conf/iclr/DuM024}, variational autoencoders (VAE)\cite{mccoy2018variational}, and importance-weighted autoencoders in MIWAE\cite{mattei2019miwae}. GANs are another commonly used approach for missing data imputation, as seen in GAIN~\cite{yoon2018gain} and VGAIN~\cite{miao2022experimental}.
Recently, distribution-matching-based imputation methods have been introduced for missing data imputation. These approaches align missing values with observed data distributions in the original space (e.g., OTImputer~\cite{muzellec2020missing}) and in the latent space (e.g., TDM~\cite{zhao2023transformed}).

\noindent \textbf{LLMs-based imputation.}
Recently, LLMs have shown a remarkable ability to understand and generate human-like text across a diverse array of tasks. Two main approaches have emerged: in-context learning and fine-tuning. In the in-context-learning-based approach, LLMs are prompted to predict missing data directly with task-related examples, as demonstrated in~\cite{narayan2022can,qian2024unidm}. Fine-tune-based methods, such as Table-GPT~\cite{li2024table} and Jellyfish~\cite{zhang2024jellyfish}, fine-tune the LLMs to adapt the model to tasks in the field of tabular data.

\vspace{-3mm}
\section{Conclusion}
\vspace{-1mm}
\label{sec:conclusion}

In this paper, we study the mixed-type data imputation and propose \netname, an LLM-enhanced unified imputation framework with high-order message passing. We first introduce a cell-oriented hypergraph to better capture the key properties of tabular structures. We then design BiHMP to learn to aggregate global-local and high-order information while capturing the column patterns through message passing on the hypergraph. 
We propose Xfusion to effectively align BiHMP with the powerful capabilities of LLM. BiHMP and Xfusion serve as the adapter of the LLM. To train \netname, we introduce chunking and progressive masking techniques within a pre-training and fine-tuning strategy. 
Theoretical and empirical studies highlight the superiority of \netname.

\balance
{
\bibliographystyle{ACM-Reference-Format}
\bibliography{sample}
}

\end{document}